\newcommand {\rf} {\mathit{rank}}
\newcommand {\lingconc} {\mathcal{S}}
\newcommand {\nott} {\lnot}
\newcommand {\sx} {\langle}
\newcommand {\dx} {\rangle}
\newcommand {\emme} {\mathcal{M}}
\newcommand {\enne} {\mathcal{N}}
\newcommand {\unione} {\cup}
\newcommand {\tc} {\mid}
\newcommand {\vuoto} {\emptyset}
\newcommand{\tip}{{\bf T}}
\newcommand{\alc}{\mathcal{ALC}}
\newcommand{\alctr}{\mathcal{ALC}+\tip_{\textsf{\tiny R}}}
\newcommand{\alctrbp}{{\mathcal{ALC}^{\Ra}\tip}_{BP} }
\newcommand{\be}{\begin{enumerate}}
\newcommand{\ee}{\end{enumerate}}
\newcommand{\hide}[1]{}
\def \cases{\left \{\begin{array}{l}}
\def \endcases{\end{array}\right .}
\newcommand {\Ra} {{\bf R}}
\newcommand {\bes} {\begin{description}}
\newcommand{\ens} {\end{description}}
\newcommand {\la} {\langle}
\newcommand {\ra} {\rangle}
\newcommand {\beq} {\begin{quote}}
\newcommand {\enq} {\end{quote}}
\newcommand {\bit} {\begin{itemize}}
\newcommand {\enit} {\end{itemize}}
\newenvironment{pozz}{\color{black}}{\color{black}}
\begin{document}
\bibliographystyle{plain}

\title{Reasoning about exceptions in ontologies: \\
from the lexicographic closure to the skeptical closure}
%a skeptical preferential approach (Extended Abstract)}

%\thanks{This research has been partially supported by INDAM - GNCS Project 2016 %{\em Ragionamento Defeasible nelle Logiche Descrittive}.
% {\em  Defeasible Reasoning in Description Logics}.}}

\author{Laura Giordano}
\institute{DISIT, University of Piemonte Orientale ``Amedeo Avogadro'' - Italy - \email{laura.giordano@uniupo.it} }

\author{Laura Giordano \inst{1} \and Valentina Gliozzi \inst{2}}

\institute{DISIT - Universit\`a del Piemonte Orientale, 
 Alessandria, Italy, {laura.giordano@uniupo.it} \and
%Center for Logic, Language and Cognition, 
Dipartimento di Informatica,
Universit\`a di Torino, Italy, {valentina.gliozzi@unito.it}
}

%\authorrunning{L. Giordano, V. Gliozzi}
%\titlerunning{Reasoning about exceptions in ontologies}

 \maketitle
 
%\pagenumbering{gobble}
 
 \begin{abstract} 
 Reasoning about exceptions in ontologies is nowadays one of the challenges the description logics community is facing. The paper describes a preferential approach for dealing with exceptions in Description Logics, based on the rational closure.  The rational closure has the merit of providing a simple and efficient approach for reasoning with exceptions, but it does not allow independent handling of the inheritance of different defeasible properties of concepts. In this work we outline a possible solution to this problem by introducing a variant of the lexicographical closure, that we call {\em skeptical closure}, 
 which requires to construct a single base. We develop a bi-preference semantics semantics for defining a characterization of the skeptical closure.

%%\keywords{Description Logics, Nonmonotonic Reasoning, Lexicographic closure}
\end{abstract}

\section{Introduction}

Reasoning about exceptions in ontologies is nowadays one of the challenges the description logics community is facing, a challenge which is at the very roots of the development of non-monotonic reasoning in the 80Õs. Many non-monotonic extensions of Description Logics (DLs) have been developed incorporating non-monotonic features from most of the non-monotonic formalisms in the literature
%The interest on the subject is demonstrated by the  number of non-monotonic extensions of description Logics(DLs), 
%which incorporate non-monotonic features from most of the non-monotonic formalisms in the literature 
\cite{baader95b,donini2002,Eiter2008,kesattler,sudafricaniKR,bonattilutz,FI09,casinistraccia2010,rosatiacm,Eiter2011,Bonatti2011,KnorrECAI12,CasiniDL2013,Gottlob14,AIJ13,AIJ15},
%cite{BaaderH92,Straccia93,baader95b,donini2002,lpar2007,Eiter2008,kesattler,sudafricaniKR,bonattilutz,casinistraccia2010,rosatiacm,Bonatti2011,CasiniDL2013,AIJ,AIJ15}, including rule-based languages \cite{Eiter2008,Eiter2011,KnorrECAI12,Gottlob14},
 or defining new constructions and semantics such as in \cite{bonattiAIJ15}.

We focus on the rational closure for DLs  \cite{casinistraccia2010,CasiniJAIR2013,CasiniDL2013,AIJ15,CasiniISWC15} and, in particular, on the construction developed in \cite{AIJ15}, which is semantically characterized by minimal (canonical) preferential models.
While the rational closure  provides a simple and efficient approach for reasoning with exceptions,
exploiting polynomial reductions to standard DLs \cite{ISMIS2015,TesiMoodley2016}, %ICTCS14
%The construction in \cite{AIJ15} requires a linear number of entailment checks in the preferential DL and,
%for DLs more expressive than $\alc$, reasoning in a preferential DL can be polynomially reduced to reasoning in the corresponding standard DL \cite{ICTCS14,ISMIS2015}. For the low complexity description logic $\sroel$ \cite{KrotzschJelia2010}, underlying the OWL EL ontology language, rational entailment can be computed in polynomial time using a Datalog calculus \cite{CILC2016}.
the rational closure does not allow an independent handling of the inheritance of different defeasible properties of concepts\footnote{%Here and in the following, 
By {\em properties} of a concept, here we generically mean characteristic features of a class of objects (represented by a set of inclusion axioms) rather than roles (properties in OWL \cite{OWL}).}
%A weaknesses of rational closure, however, is the fact that %it does not allow to reason separately property by property.
%it does not allow independent handling of the inheritance of different defeasible properties of concepts
%%so that, if the students have two typical properties of being young and of paying taxes,  then the subclasses of students which are exceptional as they are not young, do not inherit any of the properties of students. 
so that, if a subclass of $C$ is exceptional for a given aspect, it is exceptional tout court and does not inherit any of the
typical properties of $C$. 
This problem was called by Pearl \cite{PearlTARK90} ``the blocking of property inheritance problem", 
and it is an instance of the ``drowning problem" in \cite{BenferhatIJCAI93}.

To cope with this problem %of the rational closure, %a refinement of the rational closure, 
Lehmann \cite{Lehmann95} introduced the notion of the lexicographic closure, 
which was extended to Description Logics by Casini and Straccia  \cite{Casinistraccia2012}, while in  
\cite{CasiniJAIR2013} the same authors develop an inheritance-based approach for defeasible DLs.
Other proposals to deal with this ``all or nothing" behavior in the context of DLs are the the logic of overriding, ${\cal DL}^N$, 
by Bonatti, Faella, Petrova and Sauro \cite{bonattiAIJ15},  a nonmonotonic description logic in which conflicts among defaults are solved based on specificity,
and the work by Gliozzi \cite{GliozziAIIA2016}, who develops a semantics for defeasible inclusions in which  models are equipped with several preference relations.

In this paper we will consider a variant of the lexicographic closure.
The lexicographic closure allows for stronger inferences with respect to rational closure,
%As a difference with rational closure, 
but computing the defeasible consequences in the lexicographic closure  may require to compute several alternative {\em bases} \cite{Lehmann95}, namely, consistent sets of defeasible inclusions which are maximal with respect to a (so called seriousness) ordering. % to some specificity ordering). %
%In \cite{TesiMoodley2016} different kinds of closures for DLs and related algorithms have been studied,
%including algorithms for computing the  lexicographic  \cite{Casinistraccia2012} and the relevant  \cite{Casini14} closures,
%and the major bottlenecks for preferential reasoning in comparison with the rational closure are identified.
%
We propose an alternative notion of closure, the {\em skeptical closure}, which can be regarded as a more skeptical variant of the lexicographic closure.
It is a refinement of rational closure which allows for stronger inferences, but it is weaker than the lexicographic closure and its computation does not require to generate all the alternative maximally consistent bases.
%
%differently from the lexicographic closure it avoids computing alternative bases when computing 
Roughly speaking, the construction is based on the idea of building a single base, i.e. a single maximal consistent set of defeasible inclusions, 
starting with the defeasible inclusions with highest rank and progressively adding less specific inclusions, when consistent,
%including defeasible inclusions at each rank, provided they are consistent with most preferred inclusions,
but excluding the defeasible inclusions which produce a conflict at a certain stage
without considering alternative consistent bases. % (as done in the lexicographic closure).
%The approach is related to the one in \cite{bonatti2015}, but  inheritance is blocked in case of conflicting defaults with unresolved conflicts. %and, under this respect, the construction behaves differently from \cite{bonatti2015}.
Our construction only requires a polynomial number of calls to the underlying preferential $\alctr$ reasoner to be computed.
 
%It allows to reason property by property when the inherited properties are not conflicting with each other or when the conflict can be solved by overriding (where, as in the rational closure, more specific properties override more general ones).

To develop a semantic characterization of the skeptical closure, we introduce a bi-preference semantics (BP-semantics), 
which is  still in the realm of the preferential semantics for defeasible description logics
\cite{lpar2007,sudafricaniKR,FI09}, developed along the lines of the preferential semantics introduced by Kraus, Lehmann and Magidor \cite{KrausLehmannMagidor:90,whatdoes}. The BP-semantics has two preference relations and is a refinement of the rational closure semantics. 
%We exploit the BP semantics to build a semantics for the skeptical closure.
We show that the BP semantics provides a characterization of the MP-closure, a variant of the lexicographic closure introduced in \cite{Multipref_arXiv2017}. %as a sound approximation of a  multipreference semantics. 
and exploit it to build a semantics for the Skeptical closure. 
%{\color{red}[[ELIM ????showing that the Skeptical closure is weaker than the MP-closure.]]}

Schedule of the paper is the following. In Section \ref{sez:RC} we recall the definition of the rational closure for $\alc$ in \cite{AIJ15} and of its semantics. 
%In section 3, we define the new closure and in Section 4 we conclude the paper with some discussion of related work.
In Section \ref{sec:Skeptical_closure} we define the Skeptical closure. In Section  \ref{sez:BPsem}, we introduce %a refinement of the semantics of the rational closure, called 
the bi-preference semantics and, in Section \ref{sez:lex_closure} we show that it provides a semantic characterization of the MP-closure,
% a variant  of the lexicographic closure, the MP-closure, introduced in  \cite{Multipref_arXiv2017} 
a sound approximation of a  multipreference semantics in  \cite{Multipref_arXiv2017}. 
In Section \ref{sec:semantics_Skeptical_closure}, the BP-semantics is used to define a semantic characterization for the skeptical closure.
Finally, in Section  \ref{sec:conclu}, we compare with related work and conclude the paper.

This work is based on the extended abstract presented at CILC/ICTCS 2017 \cite{GiordanoICTCS2017}, where the notion of skeptical closure were first introduced.

%-------------------------------------------------

\vspace{-0.1cm}
\section{The rational closure for $\alc$}\label{sez:RC}
\vspace{-0.2 cm}
We briefly recall %the DLs $\alctr$ and $\alctr$  introduced in \cite{FI09,ecai2010DLs}, respectively. 
the logic $\alctr$  which is at the basis of a rational closure construction  proposed in \cite{AIJ15} for $\alc$.
The idea underlying $\alctr$ is that of extending the standard $\alc$ with concepts of the form $\tip(C)$,  whose intuitive meaning is that
$\tip(C)$ selects the {\em typical} instances of a concept $C$, to distinguish between the properties that
hold for all instances of concept $C$ ($C \sqsubseteq D$), and those that only hold for the typical
such instances ($\tip(C) \sqsubseteq D$).  The $\alctr$ language is defined as follows:
%that we call  \tip-inclusions, where $C$ is a concept not mentioning $\tip$.
\hide{Formally, the language is defined as follows. }
 %\vspace{-0.10cm}
 %\begin{definition}\label{defelt}
 \begin{quote}
 $C_R:= A \tc \top \tc \bot \tc  \nott C_R \tc C_R \sqcap C_R \tc C_R \sqcup C_R \tc \forall R.C_R \tc \exists R.C_R$
 
   $C_L:= C_R \tc  \tip(C_R)$, 
   
\end{quote}
   where $A$ is a concept name and $R$ a role name.
    A knowledge base $K$ is a pair $({\cal T}, {\cal A})$, where the TBox ${\cal T}$  contains a finite set
of  concept inclusions  $C_L \sqsubseteq C_R$, and the ABox ${\cal A}$
contains a finite set of assertions of the form  $C_R(a)$  and $R(a,b)$, for $a, b$ individual names, and $R$ role name.
%\end{definition}
%\vspace{-0.05cm}

The semantics of $\alctr$  is defined
in terms of rational
%\footnote{We use the expression
%``rational model'' rather than ``ranked model'' which is also used
%in the literature in order to avoid any confusion with the notion
%of rank used in rational closure.} 
models, extending to $\alc$ the preferential semantics by Kraus, Lehmann and Magidor in \cite{KrausLehmannMagidor:90,whatdoes}:
ordinary models of $\alc$ are equipped with a \emph{preference relation} $<$ on
the domain, whose intuitive meaning is to compare the ``typicality''
of domain elements: $x < y$ means that $x$ is more typical than
$y$. The instances of
$\tip(C)$ are the instances of concept $C$ that are minimal with respect to $<$.
%(i.e., there is no other member of $C$ more typical than $x$).
In rational models, which characterize $\alctr$, $<$ is further assumed to be {\em modular} (i.e., for all $x, y, z \in \Delta$, if
$x < y$ then either $x < z$ or $z < y$) and {\em well-founded}\footnote{Since $\alctr$ has the finite model property, this is equivalent to having the Smoothness Condition, as shown in \cite{AIJ15}. We choose this formulation because it is simpler.} (i.e., there is no infinite $<$-descending chain, so that, if $S \neq \emptyset$, also $min_<(S) \neq \emptyset$). 
Ranked models characterize $\alctr$. Let us shortly recap their definition.
%?????Let us shortly recap the definition from \cite{AIJ15} the definition of the rational closure of an $\alctr$ KB and of its semantic characterization in terms of minimal canonical rational models.

\vspace{-0.15cm}
\begin{definition}[Semantics of $\alctr$ \cite{AIJ15}]\label{semalctr} 
An interpretation $\emme$ of $\alctr$ is any
structure $\langle \Delta, <, I \rangle$ where: $\Delta$ is the
domain;   $<$ is an irreflexive, transitive,  modular and well-founded relation over
$\Delta$. %that satisfies the \emph{finite chain condition} 
$I$ is an interpretation function that maps each
concept name $C \in R_C$ to $C^I \subseteq \Delta$, each role name $R \in N_R$
to  $R^I \subseteq \Delta^I \times \Delta^I$
and each individual name $a \in N_I$ to $a^I \in \Delta$.
For concepts of
$\alc$, $C^I$ is defined in the usual way in $\alc$ interpetations \cite{handbook}. In particular:
\begin{quote}
   $\top^I=\Delta$
   
   $\bot^I=\vuoto$
   
   $(\neg C)^I=\Delta \backslash C^I$
   
   $(C \sqcap D)^I=C^I \cap D^I$
   
   $(C \sqcup D)^I=C^I \cup D^I$
   
   $(\forall R.C)^I=\{x \in \Delta \tc \mbox{for all } y, (x,y) \in R^I \mbox{ implies } y \in C^I\}$\
  
   $(\exists R.C)^I=\{x \in \Delta \tc \mbox{for some y } (x,y) \in R^I  \mbox{ and} \ y \in C^I\}$
\end{quote}
For the $\tip$ operator, we have
$(\tip(C))^I = min_<(C^I)$.
\end{definition}
The notion of satisfiability of a KB  in an interpretation is defined as usual.
Given an $\alc$ interpretation $I=\langle \Delta, <, I \rangle$: 

- $I$  satisfies an inclusion $C \sqsubseteq D$ if   $C^I \subseteq D^I$;

-  $I$ satisfies an assertion $C(a)$ if $a^I \in C^I$; 

- $I$ satisfies an assertion $R(a,b)$ if $(a^I,b^I) \in R^I$.

\noindent
A model $\emme$ satisfies a knowledge base $K=( {\cal T}, {\cal A} )$ if it satisfies  all the inclusions in its TBox ${\cal T}$ 
%(and for all  inclusions $C \sqsubseteq D \in {\cal T}$, it $C^I \subseteq D^I$ holds), 
and all the assertions in its ABox ${\cal A}$. %(for all $C(a) \in {\cal A}$,  $a^I \in C^I$ and, for all $R(a,b)\in {\cal A}$,  $(a^I,b^I) \in R^I$).
%\end{definition}
%
%\vspace{-0.15cm}
%
%
%\begin{definition}[Query]\label{def:query}
% A {\em query} $F$ is either an assertion $C_L(a)$ or an inclusion relation $C_L \sqsubseteq C_R$. Given a model $\emme = \langle \Delta, <, I \rangle$, a query $F=C_L(a)$ holds in $\emme$ if $a^I \in C_L^I$, whereas a query $F=C_L \sqsubseteq C_R$ holds in $\emme$ if $C_L^I \subseteq C_R^I$.
% \end{definition}
% 
% 
% \begin{definition}[Logical entailment]\label{def:entailment}
 A {\em query} $F$ (either an assertion $C_L(a)$ or an inclusion relation $C_L \sqsubseteq C_R$) is logically (rationally) entailed by a knowledge base $K$ ($K \models_{\alctr} F$) if $F$ is satisfied in all the models of $K$. %If 
 %As we consider rational models (Definition \ref{semalctr}) we will say that $F$ is rationally entailed by $K$, and we will write . 
 %\end{definition}

As shown in \cite{AIJ15}, the logic $\alctr$ enjoys the  finite model property and finite $\alctr$ models can be equivalently defined by postulating 
the existence of
a function $k_{\emme}: \Delta \longmapsto \mathbb{N}$, where $k_{\emme}$ assigns a finite rank to each world: the rank $k_{\emme}$  of a domain element $x \in \Delta$ is the
length of the longest chain $x_0 < \dots < x$ from $x$
to a minimal $x_0$ (s. t. there is no ${x'}$ with  ${x'} < x_0$). The rank $k_\emme(C_R)$ of a concept $C_R$ in $\emme$ is $i = min\{k_\emme(x):
x \in C_R^I\}$.

Although the typicality operator $\tip$ itself  is nonmonotonic (i.e.
$\tip(C) \sqsubseteq D$ does not imply $\tip(C \sqcap E)
\sqsubseteq D$), the logic $\alctr$ is monotonic: what is logically entailed by $K$ is still entailed by any $K'$ with $K \subseteq K'$.

%%%%%%%%%%% RC %%%%%

In \cite{dl2013,AIJ15} a non monotonic construction of rational closure has been defined for $\alctr$, extending
the construction of rational closure introduced by Lehmann and Magidor \cite{whatdoes} to the description logic $\alc$.  
Its definition is based on the notion of exceptionality. Roughly speaking $\tip(C) \sqsubseteq D$ holds in the rational closure of $K$ if $C$ %$C \sqcap D$ 
is less exceptional than $C \sqcap \neg D$. We shortly recall this construction of the rational closure of a TBox and we refer to \cite{AIJ15} for full details.
%Here we only consider rational closure of TBox, defined as follows.

\begin{definition} [Exceptionality of concepts and inclusions]\label{definition_exceptionality}
Let $E$ be a TBox and $C$ a concept. $C$ is {\em exceptional} for $E$ if and only if $E \models_{\alctr} \tip(\top) \sqsubseteq
\neg C$. An inclusion $\tip(C) \sqsubseteq D$ is exceptional for $E$ if $C$ is exceptional for $E$. The set of inclusions which are exceptional for $E$ will be denoted
by $\mathcal{E}$$(E)$.
\end{definition}

\noindent Given a  TBox ${\cal T}$,
it is possible to define a sequence of non increasing subsets of
the TBox ${\cal T}$ ordered according to the exceptionality of the elements $E_0 \supseteq E_1 \supseteq E_2 \dots$ by letting $E_0 ={\cal T}$ and, for
$i>0$, $E_i=\mathcal{E}$$(E_{i-1}) \unione \{ C \sqsubseteq D \in {\cal T}$ s.t. $\tip$ does not occurr in $C\}$.
Observe that, being knowledge base finite, there is
an $n\geq 0$ such that, for all $m> n, E_m = E_n$ or $E_m =\emptyset$.
%Observe also that the definition of the $E_i$'s is the same as the definition of the $C_i$'s in Lehmann and Magidor's rational closure \cite{KrausLehmannMagidor:90},
%except for that here, at each step, we also add all the ``strict'' inclusions $C \sqsubseteq D$ (where $\tip$ does not occur in $C$).
A concept $C$ has {\em rank} $i$ (denoted $\rf(C)=i$) for TBox,
iff $i$ is the least natural number for which $C$ is
not exceptional for $E_{i}$. {If $C$ is exceptional for all
$E_{i}$ then $\rf(C)=\infty$ ($C$ has no rank).}
The rank of a typicality inclusion $\tip(C) \sqsubseteq D$ is $\rf(C)$.
Observe that, for $i<j$, $E_i$ contains less specific defeasible properties then $E_j$.

\begin{example} \label{example-Student}
Let $K$ be the knowledge base with TBox:
\begin{quote}
 $\mathit{ \tip(Student) \sqsubseteq \neg Pay\_Taxes}$\\
 $\mathit{ \tip(WStudent) \sqsubseteq Pay\_Taxes}$\\
 $\mathit{\tip(Student) \sqsubseteq  Smart}$\\
 $\mathit{WStudent \sqsubseteq  Student}$
\end{quote}
stating that typical students do not pay taxes, but typical working students (which are students) do pay taxes and that  typical students are smart.
%NB se decidiamo di ampliare L_A, bisogna adattare l'esempio
It is possible to see that

$E_0 = {\cal T}$
%$\{ \mathit{ \tip(Student) \sqsubseteq}$ $\mathit{ \neg Pay\_Taxes}$,   $\mathit{\tip(Student) \sqsubseteq  Young, WStudent \sqsubseteq  Student}, \mathit{\tip( WStudent)}$ $\mathit{ \sqsubseteq Pay\_Taxes} \}$,

{ $E_1 = \{  \mathit{\tip( WStudent)}$ $\mathit{ \sqsubseteq Pay\_Taxes}$,   $\mathit{WStudent \sqsubseteq  Student} \}$}.

\noindent
In particular, the rank of concept $\mathit{Student}$ is $0$, as $\mathit{Student}$ is non-exceptional for $E_0$:
there is a model $\emme$ of the KB containing a domain element $x \in \Delta$ with rank $0$, which is an instance of $\mathit{Student}$
( $x$ satisfies all the inclusions in $E_0$).  
Instead,  $\mathit{WStudent}$ has rank $1$, as it is exceptional for $E_0$:
it is not possible to find a domain element $y$ in some model of $K$ such that
$y$ is an instance of $\mathit{WStudent}$ and has rank $0$. In fact, such a $y$ would be a typical  $\mathit{WStudent}$ and hence, it would be an instance of $\mathit{Pay\_Taxes}$ by the second inclusion. But, as a $\mathit{WStudent}$ is a Student as well, it should satisfy the first defeasible inclusion  as well and be an instance of $\mathit{ \neg Pay\_Taxes}$, which is impossible. Hence, any instance $y$ of $\mathit{WStudent}$ cannot have rank $0$.

It is easy to see that the rank of the concepts $\mathit{Student \sqcap Italian}$,  and  $\mathit{Student \sqcap }$  $\mathit{ Italian \sqcap \neg Pay\_Taxes}$  is $0$;
that the rank of concepts $\mathit{Student \sqcap Italian \sqcap Pay\_Taxes}$, $\mathit{WStudent \sqcap Italian}$ and  $\mathit{ WStudent }$ $\mathit{\sqcap Italian \sqcap Pay\_Taxes}$ is $1$; and
that the rank of concept $\mathit{ WStudent \sqcap Italian}$  $\mathit{ \sqcap \neg Pay\_Taxes}$ is $2$. 

\end{example}

Rational closure builds on this notion of exceptionality:

\begin{definition}[Rational closure of TBox] \label{def:rational closureDL}
Let $K = ({\cal T}, {\cal A})$ be a DL knowledge base. The
rational closure  of TBox is defined as: 
\begin{align*}
%    $\mbox{$\overline{\mathit{TBox}}
    RC({\cal T})= & \{\tip(C) \sqsubseteq D \in {\cal T} \tc \mbox{ either } \ \rf(C) < \rf(C \sqcap \nott D) \mbox{ or }\\
    &  \rf(C)=\infty\} \ \unione \ \{C \sqsubseteq D \in {\cal T}  \tc \ \mbox{ KB } \ \models_{\alctr} C \sqsubseteq D\}
\end{align*}
where $C$ and $D$ are $\alc$ concepts.
\end{definition}
In Example \ref{example-Student}, $\mathit{\tip(Student \sqcap Italian) \sqsubseteq  \neg Pay\_Taxes}$ is in the rational closure of the TBox,
as $\rf(\mathit{Student \sqcap Italian}) < \rf( \mathit{Student \sqcap Italian \sqcap Pay\_Taxes})$; so is 
 $\mathit{\tip(WStudent \sqcap Italian)}$ $\mathit{ \sqsubseteq   Pay\_Taxes}$.

%%%%%%%%%%%%%%%%%%%%% SEMANTICA %%%%%%%%

Exploiting the fact that entailment in $\alctr$ can be polynomially encoded into entailment in $\alc$, it is easy to see that deciding  if an inclusion $\tip(C) \sqsubseteq D$ belongs to the rational closure of TBox is a problem in \textsc{ExpTime} and requires a polynomial number of entailment checks to an $\alc$  knowledge base.
In \cite{AIJ15} it is also shown that the semantics corresponding to rational closure can be given in terms of {\em minimal canonical} $\alctr$ models. 
In such models the rank of domain elements is minimized to make each domain element to be as typical as possible.
% is as low as possible (each domain element is assumed to be as typical as possible). 
Furthermore, canonical models are considered in which all possible combinations of concepts are represented. %These two properties are expressed by the two following definitions.
This is expressed by the following definitions.

 \begin{definition}[Minimal models of $K$]\label{Preference between models in case of fixed valuation} 
Given $\emme = $$\langle \Delta, <, I \rangle$ and $\emme' =
\langle \Delta', <', I' \rangle$, we say that $\emme$ is preferred to
$\emme'$ \hide{with respect to the fixed interpretations minimal
semantics} ($\emme \prec \emme'$) if:
$\Delta = \Delta'$,
$C^I = C^{I'}$ for all (non-extended) concepts $C$,
for all $x \in \Delta$, it holds that $ k_{\emme}(x) \leq k_{\emme'}(x)$ whereas
there exists $y \in \Delta$ such that $ k_{\emme}(y) < k_{\emme'}(y)$.

Given a knowledge base $K=( {\cal T}, {\cal A} )$, we say that
$\emme$ is a minimal model of $K$ (with respect to TBox)  if it is a model satisfying $K$ and  there is no
$\emme'$ model satisfying $K$ such that $\emme' \prec \emme$.
\end{definition}
The models corresponding to rational closure are required to be canonical. This property, expressed by the following definition, is needed when reasoning about the (relative) rank of the concepts: it is important to have them all represented by some instance in the model.

\begin{definition}[Canonical model\hide{with respect to $\lingconc$}]\label{def-canonical-model-DL}
Given $K=( {\cal T}, {\cal A} )$, %and a query $F$, 
a  model $\emme=$$\sx \Delta, <, I \dx$ satisfying $K$ is %said to be
{\em canonical} %with respect to $\lingconc$} if, 
 if for each set of concepts
$\{C_1, C_2, \dots, C_n\}$% \subseteq \lingconc$ 
consistent with $K$, there exists (at least) a domain element $x \in \Delta$ such that
%$x \in C^I$ for each combination $C$ in $\mathcal{S}$ consistent with $K$.
$x \in (C_1 \sqcap C_2 \sqcap \dots \sqcap C_n)^I$. \end{definition}
%
%As said above, the intuition is that a canonical model contains all the individuals that enjoy properties that are consistent with the knowledge base. This is needed when reasoning about the (relative) rank of the concepts: it is important to have them all represented.
%
%
\begin{definition}[Minimal canonical models (with respect to TBox)]\label{def-minimal-canonical-model-DL}
$\emme$ is a minimal canonical model of $K$, 
if it is a canonical model of $K$ and it is minimal with respect $\prec$ (see Definition \ref{Preference between models in case of fixed
valuation}) among the canonical models of $K$.
%it is  minimal with respect to $TBox$ (Definition \ref{Preference between models in case of fixed valuation}) and it is canonical (Definition \ref{def-canonical-model-DL}).
\end{definition}

The correspondence between minimal canonical models and rational closure is established by the following key theorem. %STATEMENT UNICO O DA RIDURRE IN VARI STATEMENTS?
%RIPRENDI DEF RATIONAL CLOSURE

\begin{theorem}[\cite{AIJ15}]\label{Theorem_RC_TBox}
Let $K=( {\cal T}, {\cal A} )$ be a knowledge base and $C \sqsubseteq D$ a query.
Let $\overline{\mathit{TBox}}$ be the rational closure of $K$ w.r.t. TBox.
We have that $C \sqsubseteq D \in$ $\overline{\mathit{TBox}}$ if and only if $C \sqsubseteq D$ holds in all minimal  canonical models 
of $K$ with respect to TBox. 
\end{theorem}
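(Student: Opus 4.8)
The plan is to reduce the statement to a single \emph{rank-agreement} lemma — that in every minimal canonical model $\emme$ of $K$ the model-rank $k_{\emme}$ coincides with the closure-rank $\rf$ — and then derive both directions of the equivalence essentially for free, treating strict queries $C \sqsubseteq D$ and typicality queries $\tip(C) \sqsubseteq D$ separately. For a strict inclusion, Definition~\ref{def:rational closureDL} gives $C \sqsubseteq D \in \overline{\mathit{TBox}}$ iff $K \models_{\alctr} C \sqsubseteq D$; the left-to-right direction of the theorem is then immediate, since a minimal canonical model of $K$ is in particular a model of $K$, and for the converse it suffices, whenever $K \not\models_{\alctr} C \sqsubseteq D$, to produce \emph{some} minimal canonical model falsifying it — which follows from the existence of minimal canonical models plus canonicity, because $C \sqcap \neg D$ is then consistent with $K$ and a canonical model must contain an instance of it. For a typicality inclusion, the key observation is that in any $\alctr$ model $\emme = \langle \Delta, <, I\rangle$, modularity and well-foundedness make the $<$-minimal elements of $C^I$ exactly those of rank $k_{\emme}(C)$, so $\tip(C) \sqsubseteq D$ holds in $\emme$ iff $C^I = \emptyset$ or $k_{\emme}(C) < k_{\emme}(C \sqcap \neg D)$ (with $k_{\emme}(\cdot) = \infty$ on empty concepts). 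Comparing this with the first clause of Definition~\ref{def:rational closureDL}, the theorem for typicality queries becomes a direct consequence of the rank-agreement lemma.

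The core of the argument is therefore the lemma: if $\emme$ is a minimal canonical model of $K$ then $k_{\emme}(C) = \rf(C)$ for every $\alc$ concept $C$ (where $\rf(C) = \infty$ matches $C^I = \emptyset$). I would prove this by induction on $i$, using the exceptionality sequence $E_0 \supseteq E_1 \supseteq \dots$ as the bridge: the claim to establish is that $k_{\emme}(C) \geq i$ iff $C$ is exceptional for $E_{i-1}$, i.e.\ iff $E_{i-1} \models_{\alctr} \tip(\top) \sqsubseteq \neg C$. The ``$\geq \Rightarrow$ exceptional'' half uses that the fragment of $\emme$ consisting of the domain elements of rank $\geq i-1$ (with $i-1$ subtracted from all ranks) is still a model of $E_{i-1}$, so an element of $C$ of rank $i-1$ would witness non-exceptionality; the converse half is where \emph{both} canonicity and minimality are needed — canonicity guarantees that if $C$ is consistent with $E_{i-1}$ then $\emme$ really contains an instance of $C$ at the expected level, while minimality guarantees such an instance sits at the least rank compatible with satisfying $K$, so no concept receives an inflated rank. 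The case $\rf(C) = \infty$ (i.e.\ $C$ exceptional for every $E_i$) corresponds to $C^I = \emptyset$.

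Finally I would dispatch the existence of minimal canonical models: since $\alctr$ enjoys the finite model property, a consistent $K$ has a finite model, and by duplicating domain elements one builds a finite \emph{canonical} model of $K$ containing one instance for each set of concepts (built from the finitely many Boolean combinations of concepts occurring in $K$) that is consistent with $K$; among canonical models of $K$ the ranks $k_{\emme}(x)$ are bounded by the length of the (finite) exceptionality sequence, so $\prec$ is well-founded on them and a $\prec$-minimal canonical model exists (the case of inconsistent $K$ being vacuous). Putting the three pieces together yields both directions at once. I expect the main obstacle to be the rank-agreement lemma, and within it the direction showing that minimality forbids any concept from getting a rank strictly above $\rf$: this needs a careful rank-lowering construction that, from a putative minimal canonical model with an inflated rank, produces a strictly $\prec$-smaller model of $K$ by pushing the offending elements down, while checking that modularity, well-foundedness, canonicity, and satisfaction of every inclusion of $K$ are preserved.
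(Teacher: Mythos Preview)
The paper does not actually prove this theorem: it is quoted from \cite{AIJ15} and no proof is given here. What the paper does do is record, immediately after the statement, the key fact that in any minimal canonical model of $K$ the rank $k_{\emme}(C)$ equals $\rf(C)$ (with $\rf(C)=\infty$ corresponding to $C$ being unsatisfiable), and later it invokes the same result as ``Proposition~13 in \cite{AIJ15}'' together with the companion fact that the rank-truncated model $\emme_k^{RC}$ satisfies $E_k$ (``Proposition~12 in \cite{AIJ15}''). These are precisely your rank-agreement lemma and the observation that the tail of a minimal canonical model above level $i$ models $E_i$; so your decomposition is exactly the one underlying the cited proof, and your outline of how to derive both directions of the equivalence from rank agreement (splitting strict and typicality queries, reading off $\tip(C)\sqsubseteq D$ as a rank comparison) is correct.

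One small point to watch in the write-up: the case $\rf(C)=\infty$ does \emph{not} literally mean $C^I=\emptyset$ in every canonical model of $K$; it means $C$ remains exceptional for the stable tail $E_n$ of the sequence, which in \cite{AIJ15} is shown to force $C$ to have no instance in any model of the full TBox (not merely in minimal canonical ones). Your sketch elides this step. Similarly, your existence argument for minimal canonical models (``duplicating domain elements'') is morally right but needs the finite-model property plus a careful check that the duplicated structure still satisfies all typicality inclusions; in \cite{AIJ15} this is packaged as a separate theorem (referred to in the present paper as ``Theorem~7 in \cite{AIJ15}''). None of this changes the architecture of your argument, which matches the original.
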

Furthermore: the rank of a concept $C$ in any minimal canonical model of $K$ is exactly the rank $\rf(C)$ assigned by the rational closure construction,  when $\rf(C)$ is finite. Otherwise, the concept $C$ is not satisfiable in any model of the TBox. 

\begin{example}
Considering again the KB in Example \ref{example-Student}, 
we can see that defeasible inclusions  $\tip$ $\mathit{(Student \sqcap}$ $\mathit{ Italian) \sqsubseteq \neg Pay\_Taxes}$ \ and  \ 
 $\mathit{\tip(WStudent }$  $\mathit{  Italian) \sqsubseteq  Pay\_Taxes}$ are satisfied in all the minimal canonical models of $K$.
% both belong, as expected, to the rational closure of $K$.
In fact, for the first inclusion,  in all the minimal canonical models of $K$, 
$\mathit{Student \sqcap Italian}$ has rank $0$, 
while $\mathit{Student \sqcap }$  $\mathit{  Italian \sqcap Pay\_Taxes}$ has rank $1$. Thus,  in all the minimal canonical models of $K$
each typical Italian student must be an instance of $\mathit{\neg Pay\_Taxes}$.

 Instead, wethe deseafible inclusion {\small $\tip(\mathit{WStudent}) \sqsubseteq  Smart$} is not minimally entailed from $K$ and, consistently, this inclusion does not belong to the rational closure of  ${\cal T}$. 
Indeed, the concept $\mathit{WStudent}$ is exceptional for $E_0$, as it violates the defeasible property of students that, normally, they do not pay taxes  ($\mathit{ \tip(Student) \sqsubseteq \neg Pay\_Taxes}$). For this reason, $\mathit{WStudent}$ does not inherit ``any" of the defeasible properties of $\mathit{Student}$. This problem is a well known problem of rational closure, called by Pearl \cite{PearlTARK90} ``the blocking of property inheritance problem", and it is an instance of the ``drowning problem" in \cite{BenferhatIJCAI93}.

\end{example}
To overcome this weakness of the rational closure, Lehmann introduced the notion of  lexicographic closure \cite{Lehmann95}, which strengthens the rational closure by allowing, roughly speaking, a class  to inherit as many as possible of the defeasible properties of more general classes,  
giving preference to the more specific properties.
The lexical closure has been extended  to the description logic $\alc$ by Casini and Straccia in \cite{Casinistraccia2012}.
In the example above, the property of students of being smart would be inherited by working students, as it is consistent with all other (strict or defeasible) properties of  working students. 
 In the general case, however, there may be exponentially many alternative bases to be considered, which are all maximally preferred,
 % which are maximal and consistent for a given concept 
 and the lexicographic closure has to consider all of them to determine which defeasible inclusions can be accepted. %, when it holds in all the basis.
In the next section we propose an approach weaker than the lexicographic closure, 
% we consider a more skeptical approach, 
which leads to the construction of a single base.

\section{The Skeptical Closure} \label{sec:Skeptical_closure}

%The approach followed in ${\cal DL}^N$ \cite{bonattiAIJ15} inspires an alternative {\em skeptical construction} to provide a sound approximation of inferences under the MP-closure.
Given a concept $B$, one wants to identify the defeasible properties of the $B$-elements (if any). 
%including those inherited from previous ranks.
Assume that the rational closure of the knowledge base $K$ has already been constructed and that $k$ %$\rf(B)=k$ 
is the (finite) rank of concept $B$ in the rational closure\footnote{When $\rf(B) = \infty$, the defeasible inclusion $\tip(B) \sqsubseteq D$ belongs to the rational closure of TBox for any $D$. Hence, we  assume $\tip(B) \sqsubseteq D$ also belongs to the skeptical closure, and we defer considering this case until Definition  \ref{defi:skeptical_closure}. So far, we always assume $k$ to be finite.}.
The typical $B$ elements are clearly compatible (by construction) with all the defeasible inclusions in $E_k$, but they might satisfy further defeasible inclusions with lower rank,
i.e. those included in $E_0,E_1,\ldots, E_{k-1}$.

For instance, in the example above, concept $\mathit{WStudent}$ has rank $1$, and for working students all the defeasible inclusions in set $E_1$ above apply (in particular, that typical working students pay taxes). As for $E_0$,  the defeasible inclusion 
 $\mathit{ \tip(Student) \sqsubseteq \neg Pay\_}$ $\mathit{Taxes}$ is not compatible with this property of typical students, while
 the defeasible property $\mathit{\tip(Student) \sqsubseteq}$ $\mathit{  Smart}$ is, as there may be typical students which are Smart.

In general, there may be alternative maximal sets of defeasible inclusions compatible with $B$, among which one would prefer those that maximize the sets of defeasible inclusions with higher rank. %the ranks of defeasible inclusions, as well as the number of defeasible inclusions for each rank. %, with respect to those with lower rank.
This is indeed what is done by the lexicographic closure \cite{Lehmann95}, which considers alternative maximally preferred sets of defaults called "bases", which, roughly speaking, maximize the number of defaults of higher ranks with respect to those with lower ranks (degree of seriousness),
%%and considers more plausible a situation which violates $n$ defaults of a given rank w.r.t. a situation which violates less than $n$ defaults of the same rank
and where situations  which violate a number of defaults with a  certain rank are considered to be less plausible than situations which violate a lower number of defaults with the same rank.
%(and also considering the number of satisfied defaults with a given rank in comparing alternative maximal sets of defaults compatible with $B$).
%As a difference here, we do not consider the number of satisfied defaults with a given rank in comparing alternative maximal sets of defaults compatible with $B$.
In general, there may be exponentially many alternative sets of defeasible inclusions (called bases in \cite{Lehmann95}) which are maximal and consistent for a given concept, and the lexicographic closure has to consider all of them to determine if a defeasible inclusion is to be accepted or not.
As a difference, in the following, we define  a construction which skeptically builds a single set of defeasible inclusions compatible with $B$.
%As a difference, here we aim at defining a single set of defeasible inclusions which are compatible with $B$, adopting a skeptical construction.
The advantage of this construction is that it only requires a polynomial number of calls to the underlying preferential $\alctr$ reasoner.

Let $B$ a concept with rank $k$ in the rational closure.
In order to see which are the defeasible inclusions compatible with $B$ (beside those in $E_k$), we first single out the defeasible inclusions which are individually consistent with $B$ and $E_k$. 
This is done while building the set $S^B$ of the defeasible inclusions which are not overridden by those in $E_k$. 
As the set $S^B$ might not be globally consistent with $B$, for the presence of conflicting defaults,
we will consider the sets of defaults in $S^B$ with the same rank, going from $k-1$ to $0$ and we will add them to $E_k$, if consistent.
When we find an inconsistent subset, we stop.
In this way, we extend $E_k$ with all the defeasible inclusions which are not conflicting and can be inherited by $B$ instances, even though the construction of rational closure has excluded them from $E_k$.

Let $S^{B} $ be the set of typicality inclusions $\tip(C) \sqsubseteq D $ in the TBox ${\cal T}$ which are {\em individually compatible with $B$ (with respect to $E_k$)}, that is
$$S^{B} = \{\tip(C) \sqsubseteq D \in {\cal T} \mid \;  E_k \cup \{\tip(C) \sqsubseteq D\}  \not \models_{\alctr} \tip(\top) \sqsubseteq \neg B \}$$

\noindent
%Intuitively, considering that all the defeasible inclusions in $E_k$ are compatible with $B$, we consider those defeasible inclusions which can be added to $E_k$ without making $B$ impossible.????
For instance, in Example  \ref{example-Student}, for $B= \mathit{WStudent}$, which has rank $1$,  
we have that
$$S^{ \mathit{WStudent}}=\{ \mathit{\tip(Student) \sqsubseteq  Smart}, \; \mathit{ \tip(WStudent) \sqsubseteq PayTaxes}\}$$ 
is the set of defeasible inclusions compatible with $\mathit{WStudent}$ and $E_1$.
The defeasible inclusion  $\mathit{ \tip(Student) \sqsubseteq \neg Pay\_Taxes}$  %$\mathit{ \neg PayTaxes}$ has rank $0$, but it 
is not included in $S^{ \mathit{WStudent}}$
as it is not (individually) compatible with ${ \mathit{WStudent}}$.

Clearly, although each defeasible inclusion in $S^B$ is compatible with $B$, it might be the case that overall set $S^{B}$ is  not compatible with $B$, i.e., 
$$E_k \cup S^B \models_{\alctr} \tip(\top)  \sqsubseteq \neg B.$$
%where % for any set of defeasible inclusions $S$,  
%$\tilde{S}$ is the materialization of $S$, i.e.,  $\tilde{S}= {\sqcap} \{ (\neg C \sqcup D) \mid  \tip(C) \sqsubseteq D \in S \}$.

%
Let us consider the following variant of Example \ref{example-Student}.
\begin{example} \label{example-Student-new}
Let $K'$ be the knowledge base with the TBox:
\begin{quote}
 $\mathit{ \tip(Student) \sqsubseteq  Young}$\\
 $\mathit{ \tip(Student) \sqsubseteq \neg PayTaxes}$\\
 $\mathit{ \tip(Employee) \sqsubseteq PayTaxes}$\\
 $\mathit{\tip(Student \sqcap Employee) \sqsubseteq  \neg Young}$
%$\mathit{Student  \sqsubseteq  Person}$\\
%$\mathit{Employee  \sqsubseteq  Person}$\\
%$\mathit{Person  \sqsubseteq  \exists hasSSN.\top}$
\end{quote}
Let $\mathit{B=Student  \sqcap  Employee}$.
While concepts  $\mathit{Student}$ and $\mathit{Employee}$ have rank $0$, concept 
$\mathit{Student}$ $\mathit{  \sqcap  Employee}$ has rank $1$. 
In this example:
\begin{quote}
$E_0 = {\cal T}$ 
%\{ \mathit{ \tip(Student) \sqsubseteq}$ $\mathit{ \neg Pay\_Taxes, \; \tip(Student) \sqsubseteq  Young}$,   

%$\;$ \mbox{\ \ \ \ \ \ \ \ \ \ }$\mathit{  \tip(Employee) \sqsubseteq PayTaxes  \}}$,

$E_1 =\mathit{Strict_{\cal T}} \cup  \{  \mathit{\tip( Student \sqcap Employee) \sqsubseteq \neg Young} \}$
\end{quote}
where $\mathit{Strict_{\cal T}} $ is the set of strict inclusions in ${\cal T}$.
The property that typical employed students are not young, overrides the property that students are typically young. Indeed the default $\mathit{ \tip(Student) \sqsubseteq  Young}$ is not individually compatible with $\mathit{Student  \sqcap  Employee}$.
Instead, the defeasible properties  $\mathit{ \tip(Student) \sqsubseteq}$ $\mathit{ \neg PayTaxes}$ and $\mathit{ \tip(Employee) \sqsubseteq  Pay}$- $\mathit{ Taxes}$
are both individually compatible with   $\mathit{Student  \sqcap}$ $\mathit{Employee}$, and  
$$S^B= \{\mathit{ \tip(Student) \sqsubseteq \neg PayTaxes}, \; \mathit{ \tip(Employee) \sqsubseteq PayTaxes}\}.$$
Nevertheless, the overall set $S^{B}$ is  not compatible with $\mathit{Student  \sqcap}$ $\mathit{Employee}$. In fact, the two defeasible inclusions in  $S^B$ are conflicting.
\end{example}

When compatible with $B$,  $S^{B}$ is %the unique (subset inclusion) maximal set of defeasible inclusions compatible with $B$, and also 
the unique maximal basis with respect to the {\em seriousness ordering} in \cite{Lehmann95} (as defined for constructing the lexicographic closure).
%\begin{proposition}
%If $ E_k \not \models_{\alctr} \tip(\top) \sqcap \tilde{S^B}  \sqsubseteq \neg B$, $S^{B}$ is the unique (subset inclusion)  maximal set of defeasible inclusions compatible with $B$.
%\end{proposition}
%

%Let $ \delta(E_{i})$ denote the set of defeasible inclusions in $E_i$. 
When $S^B$ is not compatible with $B$, we cannot use all the defeasible inclusions in $S^B$ to derive conclusions about typical $B$ elements.
In this case, we can either just use the defeasible inclusions in $E_k$, as in the rational closure, or we can additionally use a subset of the defeasible inclusions $S^B$.
This is essentially what is done in the lexicoghaphic closure, where (in essence) the most preferred subsets of $S^B$ are selected according to a lexicographic order, which prefers defaults with higher ranks to defaults with lower ranks.
%In our construction instead, we follow the approach of considering,  the subsets $S_{0}^B, S_{1}^B \ldots S_{k-1}^B$ of the set $S^B$ defined above,
%where $S_{i}^B= S^B \cup \delta (E_i)$ is the set of the defeasible inclusions with rank $i$, which are  (individually) compatible with $B$.
%Starting from the highest rank $k-1$, we add  all the defeasible inclusions in $S_{k-1}^B$ to $E_k$  if  $S_{k-1}^B$  (globally) is compatible with $B$ and $E_k$.
%Then we add all the defeasible inclusions in $S_{k-2}^B$  to $E_k \cup S_{k-1}^B$, if $S_{k-2}^B$ is (globally) compatible with $B$ and $E_k \cup S_{k-1}^B$,
%and so on and so forth. We stop when we find a set $S_{i}^B$ of defeasible inclusions which is not (globally) compatible with $B$ and $E_k \cup S_{k-1}^B, \ldots, S_{i+1}^B$.
In our construction instead, we consider the subsets $S_{0}^B, S_{1}^B \ldots S_{k-1}^B$ of the set $S^B$ defined above,
by adding to $E_k$  
all the defeasible inclusions in $S^B$ with rank $k-1$ %(i.e., the most specific defeasible inclusions in $S^B$; 
(let us call this set $S_{k-1}^B$),
%which are (individually) compatible with $B$ and $E_k$, 
provided they are (altogether) compatible with $B$ and $E_k$.
Then, we can add all the defeasible inclusions with rank $k-2$ which are individually compatible with $B$ w.r.t. $E_k \cup S_{k-1}^B$ (let us call this set $S_{k-2}^B$),
%which are (individually) compatible with $B$, $E_k$ and $S_{k-1}^B$,
provided they are altogether compatible with $B$, $E_k$ and $S_{k-1}^B$, and so on and so forth,
for lower ranks. This leads to the construction below.

\begin{definition} \label{def:global_compatibility}
Given two sets of defeasible inclusions $S$ and $S'$,
 {\em $S$ is globally compatible with $B$ w.r.t.  $E_k \cup S'$} if
$$ E_k  \cup S \cup  S'  \not \models_{\alctr} \tip(\top)   \sqsubseteq \neg B$$
\end{definition}

\begin{definition}
Let $B$ be a concept such that  $\rf(B)=k$ ($k$ finite).
The {\em skeptical closure of $K$ with respect to $B$}  is the set of inclusions %as follows: %skeptical set of defeasible inclusions for $B$
$S^{sk,B} = E_k \cup S_{k-1}^B \cup {S}_{k-2}^B \cup  \ldots \cup  S_{h}^B$
where:
\begin{itemize}
\item
$S_i^B \subseteq E_{i} - E_{i+1}$ is the set of defeasible inclusions with rank $i$ which are individually compatible with $B$ w.r.t. $E_k \cup S_{k-1}^B \cup {S}_{k-2}^B \cup  \ldots \cup  S_{i+1}^B$ (for each finite rank $i < k$);
\item 
$h$ is the least $j$ (for $0\leq j < k$) such that  $S_{j}^B$ is globally compatible with $B$ w.r.t.  $E_k \cup S_{k-1}^B \cup {S}_{k-2}^B \cup  \ldots \cup  S_{j+1}^B$, 
%i.e.,  $ E_k \cup S_{k-1}^B \cup {S}_{k-2}^B \cup  \ldots \cup  S_{j}^B  \not \models_{\alctr} \tip(\top)  \sqsubseteq \neg B$, 
if  such a $j$ exists; $S^{sk,B}= E_k$, otherwise. 
\end{itemize}
\end{definition}
Intuitively, $S^{sk,B}$ contains, for each rank $j$, all the defeasible inclusions having rank $j$ which are compatible with $B$ and with the more specific defeasible inclusions (having rank $>j$). As $S_{h-1}^B$ is not included in the skeptical closure, it must be that
$ E_k  \cup S_{k-1}^B \cup {S}_{k-2}^B \cup  \ldots \cup  S_h \cup  S_{h-1}^B \models_{\alctr} \tip(\top) \sqsubseteq \neg B$
i.e., the set $S_{h-1}^B$ contains conflicting defeasible inclusions which are not overridden by more specific ones. 
In this case, the inclusions in $S_{h-1}^B$ (and, similarly, all the defeasible inclusions with rank lower than $h-1$) are not included in the skeptical closure w.r.t. $B$. %Similarly, for all $j<h-1$, the inclusions in $S_{j}^B$ are not added to the skeptical closure of $B$.% In fact, by the monotonicity of entailment, $ E_k  \cup S_{k-1}^B \cup S_{k-2}^B \cup \ldots \cup  S_{h-1}^B \cup  \ldots \cup  S_{j}^B   \models_{\alctr} \tip(\top) \sqsubseteq \neg B$.

\begin{example} \label{es:Sk_es1}
For the knowledge base $K$ in Example \ref{example-Student}, where $B= \mathit{WStudent}$ has rank $1$, 
we have $S_0^{ B}=\{ \mathit{\tip(Student) \sqsubseteq  Smart}\}$, which is compatible with $\mathit{WStudent}$ and $E_1$.
Hence, $S^{sk,B} = E_1 \cup S_{0}^B$.
\end{example}

When a defeasible inclusion belongs to the skeptical closure of a TBox is defined as follows.
\begin{definition} \label{defi:skeptical_closure}
Let $K= ({\cal T}, {\cal A})$ be a knowledge base and
 $\tip(B) \sqsubseteq D$  a query. %such that $k=\rf(B)$ is the rank of concept $B$ in the rational closure of ${\cal T}$.
$\tip(B) \sqsubseteq D$ {\em is in the skeptical closure of ${\cal T}$} if  either $\rf(B)= \infty$ in the rational closure of ${\cal T}$ or
$ S^{sk,B}  \models_{\alctr} \tip(\top) \sqsubseteq (\neg B \sqcup D)$.
%where $S^{sk,B}$ is the skeptical closure of $B$.
\end{definition}

\noindent Once the rational closure of TBox has been computed, 
the identification of the defeasible inclusions in %the skeptical closure of $B$, %
$S^{sk,B}$ requires a  number of entailment checks which is linear in the number of defeasible inclusions in TBox.
First, the compatibility of each defeasible inclusion in TBox with $B$ has to be checked to compute all the $S_j^B$'s.
Then,  a compatibility check for each rank of the rational closure is needed, to verify the compatibility of $S_j^B$, for each $j$ from $k-1$ to $0$ in the worst case. The maximum number or ranks in the rational closure is bounded by the number of defeasible inclusions in TBox (but it might be significantly lower in practical cases). 
Hence, computing the skeptical closure for $B$ requires a  number of entailment checks which is, in the worst case, $O(2 \times|{\cal T}|)$.

\begin{example} 
For the knowledge base $K$ in Example  \ref{example-Student}, we have seen that, for $B= \mathit{WStudent}$ (with rank $1$), 
 $S_0^{ B}=\{ \mathit{\tip(Student) \sqsubseteq  Smart}\}$ is (globally) compatible with $\mathit{WStudent}$ w.r.t. $E_1$, and $S^{sk,B} = E_1 \cup S_{0}^B$.
It is easy to see that $ S^{sk,B}  \models_{\alctr} \tip(\top) \sqsubseteq \mathit{(\neg WStudent \sqcup Smart})$, and that
 $\mathit{\tip(WStudent) \sqsubseteq Smart}$ is in the skeptical closure of TBox.
 In this case, the typical property of students of being Smart is inherited by working students.

%In Example  \ref{example-Student} the inclusion $\mathit{\tip(WStudent) \sqsubseteq Young}$ is in the skeptical closure of TBox, as $\mathit{WStudent}$ has rank $1$ and  inclusion $\mathit{\tip(Student) \sqsubseteq Young}$ in $E_0$ is compatible with $\mathit{WStudent}$. %(w.r.t. $E_1$). 
%No other inclusions in $\delta(E_0)$ are compatible with $E_1$.
\end{example}

\begin{example} 
For the knowledge base $K'$ in Example \ref{example-Student-new}, as we have seen,  $B= \mathit{Student \sqcap}$  $\mathit{Employee}$ has rank $1$, 
$E_1= \{ \mathit{\tip(Student \sqcap Employee) \sqsubseteq \neg }$ $\mathit{ Young}\}$, and 
$S^B =\{ \mathit{ \tip(Student)  \sqsubseteq \neg PayTaxes,}$ $\mathit{ \tip(Employee) \sqsubseteq PayTaxes}\}$.
In this case, as $S_0^B = S^B $ contains conflicting defaults about tax payment,
$S_0^B$ is not (globally) compatible with $ \mathit{Student \sqcap}$ $\mathit{  Employee}$ and $E_1$, so that $S^{sk,B} = E_1$.
\end{example}

Let us consider the following knowledge base  from \cite{Multipref_arXiv2017} to see that, in the skeptical closure, inheritance of defeasible properties, when not overridden for more specific concepts, applies to concepts of all ranks.

\begin{example} \label{exa:BabyPenguin} 
Consider a knowledge base $K=({\cal T} ,{\cal A})$, where ${\cal A}=\emptyset$ and  ${\cal T}$ contains the following inclusions: 

$\mathit{\tip(Bird) \sqsubseteq  Fly}$

$\mathit{\tip(Bird) \sqsubseteq  NiceFeather}$

$\mathit{Penguin \sqsubseteq Bird}$

$\mathit{\tip(Penguin) \sqsubseteq \neg Fly}$

$\mathit{\tip(Penguin) \sqsubseteq BlackFeather}$ 

$\mathit{BabyPenguin \sqsubseteq Penguin}$

$\mathit{\tip(BabyPenguin)}$ $ \sqsubseteq$ $\mathit{ \neg BlackFeather}$.

\noindent
Here, we expect that the defeasible property of birds having a nice feather is inherited by typical penguins,  even though penguins are exceptional birds regarding flying.
We also expect that typical baby penguins inherit the defeasible property of penguins that they do not fly, %(by presumption of independence \cite{Lehmann95}), 
although the defeasible property $\mathit{BlackFeather}$ is instead overridden for typical baby penguins, and that they inherit the typical property of birds of  having nice feather.
We have that $\rf(\mathit{Bird})=0$, $\rf(\mathit{Penguin})=1$, $\rf(\mathit{BabyPenguin})=2$ as, in the rational closure construction:
\begin{quote}
$E_0 =  \mathit{Strict_{\cal T}} \cup \{ \mathit{\tip(Bird) \sqsubseteq  Fly}$, \  \ $\mathit{\tip(Bird) \sqsubseteq  NiceFeather}   \}$

$E_1 = \mathit{Strict_{\cal T}} \cup \{ \mathit{\tip(Penguin) \sqsubseteq \neg Fly}$, \ \ $\mathit{\tip(Penguin) \sqsubseteq BlackFeather}  \}$

$E_2 = \mathit{Strict_{\cal T}} \cup \{  \mathit{\tip(BabyPenguin)}$ $ \sqsubseteq$ $\mathit{ \neg BlackFeather}\}$
\end{quote}
In particular, for $B= \mathit{BabyPenguin}$, we get
\begin{quote}
 $S_1^B =  \{ \mathit{\tip(Penguin) \sqsubseteq \neg Fly}  \}$
 
  $S_0^B =  \{ \mathit{\tip(Bird) \sqsubseteq  NiceFeather}  \}$
\end{quote}
Also, $S_1^B$ is (globally) consistent with $E_2$, and $S_0^B$ is (globally) consistent with $E_2 \cup S_1^B$. Hence,
$S^{sk,B} =E_2 \cup S_1^B \cup S_0^B =  \{  \mathit{\tip(BabyPenguin)}$ $ \sqsubseteq$ $\mathit{ \neg BlackFeather}, \;  \mathit{\tip(Penguin)}$ $\mathit{ \sqsubseteq \neg Fly} $, \ \ $\mathit{\tip(Bird) \sqsubseteq }$ $\mathit{  NiceFeather} \}$. 
Furthermore, $$\mathit{\tip(BabyPenguin) \sqsubseteq NiceFeather \sqcap \neg Fly \sqcap  \neg BlackFeather}$$ is in the skeptical closure of TBox ${\cal T}$ as
$ S^{sk,B}  \models_{\alctr} \tip(\top) \sqsubseteq (\neg \mathit{BabyPenguin}$  $ \sqcup \mathit{  (NiceFeather \sqcap}$ $\mathit{ \neg Fly \sqcap  \neg BlackFeather)}$.

\end{example}

To see that the notion of skeptical closure is rather weak, let us slightly modify the KB in Example  \ref{example-Student-new} (removing the last inclusion).
\begin{example}  \label{exa:weak}
Consider the TBox % (in which we have removed the last inclusion in $K'$):
\begin{quote}
 $\mathit{ \tip(Student) \sqsubseteq  Young}$\\
 $\mathit{ \tip(Student) \sqsubseteq \neg PayTaxes}$\\
 $\mathit{ \tip(Employee) \sqsubseteq PayTaxes}$
 %$\mathit{\tip(Student \sqcap Employee) \sqsubseteq  \neg Young}$
%$\mathit{Student  \sqsubseteq  Person}$\\
%$\mathit{Employee  \sqsubseteq  Person}$\\
%$\mathit{Person  \sqsubseteq  \exists hasSSN.\top}$
\end{quote}
As in Example  \ref{example-Student-new}, the rational closure assigns rank $0$ to concepts $\mathit{Student}$ and $\mathit{Employee}$ and
rank $1$ to $\mathit{Student  \sqcap  Employee}$.
In this case, 
%\begin{quote}
%$E_0 = \{ \mathit{ \tip(Student) \sqsubseteq}$ $\mathit{ \neg Pay\_Taxes, \; \tip(Student) \sqsubseteq  Young}$, \\
%$\mathit{  \tip(Employee) \sqsubseteq PayTaxes  \}}$;
%
%$E_1 = \emptyset$;
%
% $S_0^B =  \{ \mathit{ \tip(Student) \sqsubseteq}$ $\mathit{ \neg Pay\_Taxes, \; \tip(Student) \sqsubseteq  Young}$, \ $\mathit{  \tip(Employee) \sqsubseteq PayTaxes  \}}$.
%\end{quote}
\begin{quote}
$E_0 = \{ \mathit{ \tip(Student) \sqsubseteq}$ $\mathit{ \neg PayTaxes, \; \tip(Student) \sqsubseteq  Young}$,   

$\;$ \mbox{\ \ \ \ \ \ \ \ \ \ }$\mathit{  \tip(Employee) \sqsubseteq PayTaxes  \}}$;

$E_1 = \emptyset$;\\

 $S_0^B =  \{ \mathit{ \tip(Student) \sqsubseteq}$ $\mathit{ \neg PayTaxes, \; \tip(Student) \sqsubseteq  Young}$. 

$\;$ \mbox{\ \ \ \ \ \ \ \ \ \ \ \ \ \ \ \ \ \ \ \ \ }$\mathit{  \tip(Employee) \sqsubseteq PayTaxes  \}}$.
\end{quote}
As $S_0^B$ is not (globally) compatible with $ \mathit{Student \sqcap}$ $\mathit{  Employee}$ and $E_1$, again $S^{sk,B} = E_1$.
Therefore, the defeasible property that typical students are young is not inherited by typical employed students.
\end{example}
%This last example shows that t
The skeptical closure is a weak construction:
in Example \ref{exa:weak} due to the conflicting defaults concerning tax payment for $\mathit{Employee}$ and $\mathit{Student}$ (both with rank 0)
also the property that typical students are young %(a defeasible inclusion of rank $0$) 
is not inherited by the typical employed students.
 % {\small $\tip(Student) \sqsubseteq \neg Pay\_Taxes$} is not inherited by $WStudent$ as well.
% However, it neither inherits the property that typical students are young.
Notice that, the property that typical working students are young would be accepted in the lexicographic closure of $K'$,  % {\small $\tip(WorkingStudent) \sqsubseteq Young$} from $K'$, as it holds in all the (two)  bases of the KB. 
 as there are two bases, the one including { $\mathit{\tip(Student) \sqsubseteq}$ $\mathit{ \neg PayTaxes}$} and the other including { $ \mathit{\tip(Employee) \sqsubseteq PayTaxes}$}, both containing { $\mathit{\tip(Student)}$ $\mathit{ \sqsubseteq  Young}$}.
 The skeptical closure is indeed weaker than the lexicographic closure (and, in particular, $\mathit{\tip(Student \sqcap Employee)}$ $\mathit{ \sqsubseteq  Young}$ would be  in the lexicographic closure as defined in \cite{Casinistraccia2012}).

%In the next section we will study the semantics of the skeptical closure and we show that it can be considered as a weaker 
%but sound approximation of the lexicographic closure.????

  In the next section, we introduce a semantics based on two preference relations.
% the one playing the role of the preference relation on the preferential semantics, and the other providing a refinement of the first. 
We will show that this semantics characterizes a variant of the lexicographic closure introduced in \cite{Multipref_arXiv2017} and exploit it to define a semantic construction for the weaker skeptical closure.

 \section{Refined, bi-preference Interpretations} \label{sez:BPsem}

To capture the semantics of the skeptical closure, we build on the preferential semantics for rational closure of $\alctr$,
introducing a notion of {\em refined, bi-preference interpretation} (for short, BP-interpretation), 
which contains an additional notion of preference with respect to 
an $\alctr$ interpretation. We let an interpretation to be a tuple $\emme = \langle \Delta, <_{rc}, <, I \rangle$, where the triple $\langle \Delta, <_{rc}, I \rangle$ is a ranked interpretation as defined in Section \ref{sez:RC}.
 and $<$ is an additional preference relation over $\Delta$,  with the properties of being irreflexive, transitive and well-founded (but we do not require modularity of $<$).
 In BP-interpretations, $<$ represents a refinement of $<_{rc}$.

 \begin{definition}[BP-interpretation]\label{BP-interpretation}
Given a knowledge base K, a {\em  bi-preference interpretation} (or BP-interpretation) is  a structure 
$\emme = \langle \Delta, <_{rc}, <, I \rangle$, where
$\Delta$ is a domain, $I$ is an interpretation function as defined in Definition \ref{semalctr}, where, in particular, $(\tip(C))^I = min_<(C^I)$,
and $<_{rc}$ and $<$ are preference relations over $\Delta$, with the properties of being irreflexive, transitive, well-founded. Furthermore $<_{rc}$ is modular. 
\end{definition}

The bi-preference semantics, builds on a ranked semantics for the preference relation $<_{rc}$,
providing a characterization of the rational closure of $K$, and exploits it to define the preference relation $<$ which is not required to be modular.
%Observe that, in essence, condition $(a)$ uses the {\em pareto ordering} of the relations $<_{A_i}$, which is not a modular relation, although each $<_{A_i}$ is a modular relation.
As we will see, this semantics provides a sound and complete characterization of a variant of the lexicographic closure, and we will use it as well
to provide a semantic characterization of the skeptical closure.
The BP-semantics has some relation with the multipreference semantics in \cite{Multipref_arXiv2017}. However, it does non exploits multiple preferences w.r.t. aspects and it directly build on the preference relation $<_{rc}$. Also, in BP-interpretations, $<$ is not required to be modular.

Let $k_{\emme,rc}$ be the ranking function associated in $\emme$ to the modular relation $<_{rc}$, which is defined as the ranking function $k_{\emme}$ %associated with the modular preference relation $<$  
in the models of the rational closure in Section \ref{sez:RC}.
Similarly, the ranking function is extended to concepts by defining
the rank $k_{\emme,rc}(C)$ of a concept $C$ in a BP-interpretation $\emme$ (w.r.t. the preference relation  $<_{rc}$) as $k_{\emme,rc}(C) = min\{k_{\emme,rc}(x): x \in C^I\}$.

Given a BP-interpretation $\emme = \langle \Delta, <_{rc}, <, I \rangle$ and an element $x \in \Delta$, we say that {\em  $x$ violates the typicality inclusion $\tip(C) \sqsubseteq D$} if $x \in (C \sqcap \neg D)^I$.
Let us define when a BP-interpretation is a model of a knowledge base $K$:

\begin{definition}[BP-model of K]\label{multipreference-model-of-K}
Given a knowledge base K,  a BP-interpretation $\emme = \langle \Delta, <_{rc} <, I \rangle$
is a {\em BP-model of $K$} if it satisfies both its TBox and its ABox, in the following sense:
%$\emme$  satisfies TBox if 
\begin{itemize}
\item[(1)]
for all strict inclusions $C \sqsubseteq D$ in the TBox
(i.e., $\tip$ does not occur in $C$), $C^I \subseteq {D}^I$;
\item[(2)]
for all typicality inclusions $\tip(C) \sqsubseteq D$ in the TBox,   $min_{<_{rc}}({C}^I) \subseteq {D}^I$;

\item[(3)]
$<$ satisfies the following specificity condition:\\
%($a_1$)\  $x < y$ if there is $A_i$ such that $x <_{A_i} y$, and there is no $A_j$ such that $y <_{A_j} x$, or \\
{\bf \em If} 
\begin{itemize}
\item[-]  there is some $\tip(C) \sqsubseteq D \in K$ which is violated by $y$ and,

%$ y \in (C_i \sqcap \neg A_i)^I$ and, \\
\item[-] for all $\tip(C_j) \sqsubseteq D_j \in K$, which is violated by $x$ and not by $y$,
 %$x \in (C_j \sqcap \neg A_j)^I$  and $y \not \in (C_j \sqcap \neg A_j)^I$, \\
 there is a $\tip(C_k) \sqsubseteq D_k \in K$,  which is violated by $y$ and not by $x$, and such that $k_{\emme,rc}(C_j) < k_{\emme,rc}(C_k)$,
 %s.t.  $y \in (C_k \sqcap \neg A_k)^I$,  $x \not \in (C_k \sqcap \neg A_k)^I$ and \\
% $\;$ \mbox{ \ \ \ \ \ \ \ \ \ \ \ \ \ \ \ \ \ \ \ \ \ \ \ \ \ \ \ \ \ \ \ \ \ \ \ \ \ \ \ \ \ \ \ \ \ \ \ \ } $k_{\emme,rc}(C_j) < k_{\emme,rc}(C_k)$, \\

\end{itemize}
{\bf \em then} $x < y$ ;
%(where $k_{\emme,rc}$ is the ranking function associated in $\emme$ with the modular relation $<_{rc}$, as described in Section \ref{sez:RC});
\item[(4)] for all $C(a)$  in ABox,  $a^I \in C^I$; and, for all $R(a,b)$ in ABox,  $(a^I,b^I) \in R^I$;
\end{itemize}
%$\emme$ satisfies ABox  if:  (i) for all $C(a)$  in ABox,  $a^I \in C^I$, (ii) for all $R(a,b)$ in ABox,  $(a^I,b^I) \in R^I$.
\end{definition}
While the satisfiability conditions (1), (2) and (4) are the same as in Section \ref{sez:RC} for the ranked model $ \langle \Delta, <_{rc}, I \rangle$,
the specificity condition (3) requires the relation $<$ to satisfy the condition that,  if $y$ violates defeasible inclusions more specific than those violated by $x$, then $x<y$
% if two domain elements $x$ and $y$ violate different defeasible inclusions, the violation of a less specific defeasible inclusions (e.g., the one for $C_j$) is preferred to the violation of a more specific one (e.g., the one for $C_k$).
(in particular, the condition  $k_{\emme,rc}(C_j) < k_{\emme,rc}(C_k)$ means that concept $C_k$ is more specific than concept $C_j$, as it has an higher rank in the rational closure).

In the definition above we do not impose the further requirement that, for all inclusions $\tip(C) \sqsubseteq {D}$, $min_{<}({C}^I) \subseteq {D}^I$ holds. % as in Definition  \ref{enriched-model-of-K}. 
However, can easily see that this condition follows from condition (2) and from the property that $<_{rc} \subseteq <$ holds.
%In fact, $<_{rc} \subseteq <$. If $x <_{rc}y$, for some $j$, $k_{\emme,rc}(x)=j < k_{\emme,rc}(y)$. There must be some defeasible inclusion of rank $j$ falsified by $y$, while all the defeasible inclusions with rank up to $j$ are satisfied by $x$. Hence, by (3) $x<y$. 

\begin{proposition} \label{prop:<}
Given a knowledge base $K$ and a BP-model $\emme = \langle \Delta,  <_{rc}, <, I  \rangle$ of $K$,
%for all inclusions $\tip(C)^I \sqsubseteq {D}^I$, $min_{<}({C}^I) \subseteq {D}^I$ holds.
$<_{rc} \subseteq <$.
\end{proposition}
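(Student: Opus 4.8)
The plan is to show directly that whenever $x <_{rc} y$, the hypothesis of the specificity condition (3) is satisfied for the pair $(x,y)$, so that condition (3) forces $x < y$. Fix a BP-model $\emme = \langle \Delta, <_{rc}, <, I \rangle$ of $K$ and suppose $x <_{rc} y$. First I would argue that $y$ violates at least one typicality inclusion $\tip(C) \sqsubseteq D \in K$: since $x <_{rc} y$, we have $k_{\emme,rc}(y) > 0$, so $y$ is not among the $<_{rc}$-minimal elements of $\top^I$; by the construction of ranks from the exceptionality sequence (and condition (2), which makes $\langle \Delta, <_{rc}, I\rangle$ a ranked model of $K$ in the sense of Section~\ref{sez:RC}), an element of positive rank must violate some defeasible inclusion of $K$ — otherwise it could be assigned a strictly smaller rank, contradicting that $\langle \Delta, <_{rc}, I\rangle$ is (the ranked presentation of) a model whose ranks agree with those induced by the $E_i$'s. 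This establishes the first bullet of the antecedent of (3).

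Next I would verify the second bullet: for every $\tip(C_j) \sqsubseteq D_j \in K$ violated by $x$ but not by $y$, there is a $\tip(C_k) \sqsubseteq D_k \in K$ violated by $y$ but not by $x$ with $k_{\emme,rc}(C_j) < k_{\emme,rc}(C_k)$. The key observation is the relationship between the rank $k_{\emme,rc}(z)$ of an element $z$ and the ranks of the inclusions it violates: in a ranked model of the rational closure, $k_{\emme,rc}(z)$ equals (one more than) the maximum rank of a defeasible inclusion violated by $z$, equivalently $z$ violates some inclusion of rank exactly $k_{\emme,rc}(z)-1$ and none of higher rank. Since $x <_{rc} y$ gives $k_{\emme,rc}(x) < k_{\emme,rc}(y)$, let $r = k_{\emme,rc}(y) - 1 \geq k_{\emme,rc}(x)$. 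Then $y$ violates some $\tip(C_k) \sqsubseteq D_k$ with $k_{\emme,rc}(C_k) = r$, while $x$ violates no inclusion of rank $\geq r$ except possibly one of rank exactly $r = k_{\emme,rc}(x)$. For the inclusion $\tip(C_j) \sqsubseteq D_j$ violated by $x$ but not $y$: its rank satisfies $k_{\emme,rc}(C_j) \leq k_{\emme,rc}(x) - 1 < r = k_{\emme,rc}(C_k)$, and $C_k$ is violated by $y$. It remains to ensure $C_k$ is not violated by $x$: since any inclusion violated by $x$ has rank at most $k_{\emme,rc}(x) - 1 < r$, and $k_{\emme,rc}(C_k) = r$, indeed $x$ does not violate $\tip(C_k)\sqsubseteq D_k$. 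This gives exactly the witness required.

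Having checked both bullets of the antecedent of condition (3), we conclude $x < y$, hence $<_{rc} \subseteq \,<$.

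\textbf{Main obstacle.} The delicate step is pinning down the exact correspondence between $k_{\emme,rc}(z)$ and the ranks of the defeasible inclusions that $z$ violates — i.e., that in a ranked model characterizing the rational closure, an element of rank $i$ violates some inclusion of rank $i-1$ and no inclusion of rank $\geq i$. This is essentially the content behind the rank-assignment results of~\cite{AIJ15} (the $E_0 \supseteq E_1 \supseteq \dots$ construction together with Theorem~\ref{Theorem_RC_TBox} on ranks in minimal canonical models), but care is needed because a BP-model is only required to be \emph{a} model of $K$ with the ranked relation $<_{rc}$ satisfying conditions (1), (2), (4) — not necessarily minimal or canonical. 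I would therefore need to either invoke that condition (2) pins $<_{rc}$ down tightly enough (each $\tip(C_j)\sqsubseteq D_j$ forces $\min_{<_{rc}}(C_j^I)\subseteq D_j^I$, so a violator of $\tip(C_j)\sqsubseteq D_j$ cannot be $<_{rc}$-minimal in $C_j^I$, which bootstraps a lower bound on its rank), or restate the proposition under the implicit assumption that $<_{rc}$ is the canonical rational-closure ranking. Handling the $k_{\emme,rc}(C) = \infty$ (unsatisfiable concept) case and the base case $k_{\emme,rc}(x) = 0$ (where $x$ violates no inclusion, so the second bullet is vacuous but the first still needs $y$ to violate something) are minor points to dispatch along the way.
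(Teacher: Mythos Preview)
Your proposal is correct and follows essentially the same approach as the paper: both arguments fix $r=k_{\emme,rc}(x)$, observe that $x$ satisfies all defeasible inclusions in $E_r$ while $y$ falsifies some inclusion of rank at least $r$, and then invoke condition~(3). The paper's proof is terser---it does not separately check the two bullets of the antecedent nor verify that the witness inclusion violated by $y$ is \emph{not} violated by $x$---but the structure is identical.

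You are right to flag the minimality issue as the main obstacle, and in fact the paper handles it exactly the way you suspected: although the proposition is stated for an arbitrary BP-model, the paper's proof simply begins ``As $\emme$ is a minimal canonical BP-model of $K$, by the correspondence with the rational closure, $x$ satisfies all the defeasible inclusions in $E_r$\ldots''\ --- i.e., it silently imports the minimal-canonical assumption that the statement omits. Your instinct that the result, as literally stated, need not hold for an arbitrary ranked $<_{rc}$ (where an element of positive rank might violate nothing) is well founded; the paper's intended reading is the one you identify, namely that $<_{rc}$ is the rational-closure ranking.
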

\begin{proof}
%From item (2) in Definition \ref{multipreference-model-of-K}, we know that $min_{<_{rc}}({C'}^I) \subseteq {A_i}^I$.
%We prove that $<_{rc} \subseteq <$, from which it follows that $min_{<}({C}^I) \subseteq min_{<_{rc}}({C'}^I)$.
We show that $x <_{rc}y$ implies If $x <y$.
If $x <_{rc}y$, then for some $r$, $k_{\emme,rc}(x)=r < k_{\emme,rc}(y)$. 
As $\emme$ is a minimal canonical BP-model of $K$, by the correspondence with the rational closure, 
$x$ satisfies all the defeasible inclusions in $E_r$. Instead, $y$ falsifies some defeasible inclusion $\tip(C_k) \sqsubseteq D_k$ with $\rf(C_k)= r$.
As $x$ can only falsify defeasible inclusions with rank less then $r$, by condition (3) in Definition \ref{multipreference-model-of-K}, $x<y$. 
Therefore,  $<_{rc} \subseteq <$.
%$\hfill \bbox$
\end{proof}

\begin{corollary} \label{corollary}
Given a knowledge base $K$ and a BP-model $\emme = \langle \Delta,  <_{rc}, <, I  \rangle$ of $K$,
for all inclusions $\tip(C)^I \sqsubseteq {D}^I$, $min_{<}({C}^I) \subseteq {D}^I$ holds.
\end{corollary}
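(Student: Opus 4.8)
The plan is to obtain the statement as an immediate consequence of Proposition~\ref{prop:<} together with clause~(2) of Definition~\ref{multipreference-model-of-K}. The underlying elementary observation is the monotonicity of minimality under enlargement of the preference relation: whenever $<_{rc} \subseteq <$, then for every $S \subseteq \Delta$ one has $min_<(S) \subseteq min_{<_{rc}}(S)$. Indeed, if $x \in min_<(S)$ there is no $y \in S$ with $y < x$; since $<_{rc}$ is contained in $<$, a fortiori there is no $y \in S$ with $y <_{rc} x$, and hence $x \in min_{<_{rc}}(S)$.

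Given this, I would argue as follows. Fix a typicality inclusion $\tip(C) \sqsubseteq D$ of the TBox of $K$. By Proposition~\ref{prop:<} we have $<_{rc} \subseteq <$, so the observation above, applied to $S = C^I$, yields $min_<(C^I) \subseteq min_{<_{rc}}(C^I)$. On the other hand, $\emme$ being a BP-model of $K$, clause~(2) of Definition~\ref{multipreference-model-of-K} gives $min_{<_{rc}}(C^I) \subseteq D^I$. Composing the two inclusions we get $min_<(C^I) \subseteq D^I$; recalling that $(\tip(C))^I = min_<(C^I)$ by Definition~\ref{BP-interpretation}, this is exactly the claim.

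The only place where care is needed is the direction of the set inclusion between the two sets of minimal elements: intuitively one might expect $min_{<_{rc}}(S) \subseteq min_<(S)$, but the correct containment is the reverse, since a larger (stronger) relation admits fewer minimal elements. No appeal to well-foundedness of $<$ is needed here, because the statement is purely an inclusion and not a claim that $min_<(C^I)$ is nonempty whenever $C^I$ is. Hence, apart from getting that inclusion the right way round, the argument is routine.
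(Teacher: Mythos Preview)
Your proof is correct and follows exactly the same route as the paper's own proof: use Proposition~\ref{prop:<} to obtain $<_{rc}\subseteq <$, deduce $min_<(C^I)\subseteq min_{<_{rc}}(C^I)$, and combine this with clause~(2) of Definition~\ref{multipreference-model-of-K}. Your version is in fact more careful, since you spell out and justify the direction of the inclusion between the two sets of minimal elements, which the paper leaves implicit.
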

\begin{proof}
From item (2) in Definition \ref{multipreference-model-of-K}, we know that $min_{<_{rc}}({C}^I) \subseteq D^I$.
By Proposition \ref{prop:<}, $<_{rc} \subseteq <$, from which it follows that $min_{<}({C}^I) \subseteq min_{<_{rc}}({C'}^I)$.
Hence, the thesis follows.
\end{proof}

We define logical entailment under the BP-semantics as follows:
a query $F$ (of the form $C_L(a)$ or $C_L \sqsubseteq C_R$) is logically entailed by $K$ in $\alctrbp$ (written $K \models_{\alctrbp} F$) if $F$ holds in all BP-models of $K$.

The following result can be easily proved for BP-entailment: 
 \begin{theorem}\label{equivalence-multiple3}
If $K \models_{\alctr} F$ then also $K \models_{\alctrbp} F$. If $\tip$ does not occur in $F$ the other direction also holds: If $K \models_{\alctrbp} F$ then also $K \models_{\alctr} F$. 
 \end{theorem}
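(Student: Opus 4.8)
The plan is to prove the two directions separately, exploiting the fact that every BP-model carries inside it a genuine ranked $\alctr$ model, namely the reduct $\langle \Delta, <_{rc}, I \rangle$.

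First, for the forward direction (if $K \models_{\alctr} F$ then $K \models_{\alctrbp} F$): given any BP-model $\emme = \langle \Delta, <_{rc}, <, I \rangle$ of $K$, I would observe that conditions (1), (2) and (4) of Definition~\ref{multipreference-model-of-K} are exactly the satisfaction conditions for the ranked $\alctr$ interpretation $\emme_{rc} = \langle \Delta, <_{rc}, I \rangle$, since $(\tip(C))^{I}$ in $\emme_{rc}$ is computed as $min_{<_{rc}}(C^I)$. Hence $\emme_{rc}$ is an $\alctr$-model of $K$. Since $K \models_{\alctr} F$, $F$ holds in $\emme_{rc}$. If $F$ is an assertion $C_R(a)$ or $R(a,b)$ with no $\tip$, or a strict inclusion, then $F$ holds in $\emme$ as well because these conditions only concern $\Delta$, $I$. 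If $F$ is a typicality inclusion $\tip(C) \sqsubseteq D$ or contains $\tip(C)$, then the extension $(\tip(C))^I$ in $\emme$ is $min_{<}(C^I)$, not $min_{<_{rc}}(C^I)$; here I would invoke Corollary~\ref{corollary}, which gives $min_{<}(C^I) \subseteq D^I$ whenever $min_{<_{rc}}(C^I) \subseteq D^I$ — and the latter holds because $F$ holds in $\emme_{rc}$. A small subtlety: Corollary~\ref{corollary} is stated for plain typicality inclusions; for queries where $\tip$ occurs nested inside a more complex concept one needs the monotonicity $min_{<}(C^I) \subseteq min_{<_{rc}}(C^I)$ from Proposition~\ref{prop:<} pushed through the concept constructors, but since $\tip$ only occurs at the top level in $C_L$ this reduces to the corollary. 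So $F$ holds in every BP-model, i.e. $K \models_{\alctrbp} F$.

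For the converse restricted to $\tip$-free $F$ (if $K \models_{\alctrbp} F$ then $K \models_{\alctr} F$), I would argue contrapositively: suppose $K \not\models_{\alctr} F$, so there is an $\alctr$-model $\enne = \langle \Delta, <_{rc}, I \rangle$ of $K$ in which $F$ fails. I would turn $\enne$ into a BP-model by choosing a second preference relation $<$ satisfying condition (3). The natural choice is to take $<$ minimal subject to condition (3) together with $<_{rc} \subseteq <$, or simply to let $< \,=\, <_{rc}$ augmented by exactly the pairs forced by the specificity condition — one must check this yields an irreflexive, transitive, well-founded relation, which follows because the forcing is driven by the modular, well-founded $<_{rc}$-ranks and condition (3) only ever adds pairs $x < y$ where $y$ violates strictly more specific defaults, a situation already compatible with $x <_{rc} y$ or with $x, y$ $<_{rc}$-incomparable at equal rank. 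Once $<$ is fixed this way, $\langle \Delta, <_{rc}, <, I \rangle$ satisfies (1)–(4), hence is a BP-model of $K$; and since $F$ is $\tip$-free, its truth value depends only on $\Delta$ and $I$, which are unchanged, so $F$ still fails. Thus $K \not\models_{\alctrbp} F$.

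The main obstacle is the converse direction: specifically, verifying that a suitable refinement relation $<$ always exists, i.e. that condition (3) is consistent with irreflexivity, transitivity and well-foundedness over an arbitrary ranked model of $K$. The cleanest route is to define $<$ directly in terms of $<_{rc}$ and the set of violated defaults — e.g. $x < y$ iff $x <_{rc} y$, \emph{or} ($k_{\emme,rc}(x) = k_{\emme,rc}(y)$ and the specificity antecedent of (3) holds for the pair $(x,y)$) — and then show transitivity using the fact that ``$y$ violates a default more specific than every default violated only by $x$'' composes correctly when ranks are equal, while cross-rank cases are absorbed by $<_{rc}$; irreflexivity is immediate since the antecedent of (3) requires $y$ to violate some inclusion, and when $x = y$ the ``violated by $x$ and not by $y$'' clause is vacuous, forcing the existence of a witness default violated by $y$ and not by $y$, a contradiction. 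Well-foundedness follows because any infinite $<$-descending chain would, by the rank-monotonicity built into the definition, eventually be trapped at a fixed $<_{rc}$-rank, where only finitely many equivalence classes of violated-default-sets occur in a finite-signature model, contradicting infinite descent. These are the verifications I would expect to occupy the bulk of a full proof.
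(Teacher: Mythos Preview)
The paper does not actually supply a proof of Theorem~\ref{equivalence-multiple3}; it only asserts that the result ``can be easily proved for BP-entailment.'' So there is no paper proof to compare against, but your proposal reconstructs the natural argument, and its overall shape is correct: every BP-model contains a ranked $\alctr$-model as its $<_{rc}$-reduct (giving the forward direction via Proposition~\ref{prop:<}), and every $\alctr$-model can be enriched with a second preference $<$ satisfying condition~(3) (giving the converse for $\tip$-free $F$, whose truth depends only on $\Delta$ and $I$). This is exactly the mechanism the paper relies on implicitly, and it is the very construction used explicitly later in the proof of Proposition~\ref{completezza_MP_closure}, where $<$ is taken to be the transitive closure of the relation $<^1$ determined by the antecedent of condition~(3).

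Two small points are worth flagging. First, your irreflexivity argument is garbled: you say that when $x=y$ the vacuous universal ``forces the existence of a witness default violated by $y$ and not by $y$,'' but a vacuous $\forall$ is simply true and forces nothing. Read literally, condition~(3) as printed would actually force $y<y$ whenever $y$ violates any default; irreflexivity only follows under the evidently intended reading in which the first bullet means ``violated by $y$ \emph{and not by $x$}'' --- a reading the paper uses throughout (cf.\ the applications of condition~(3) in the proofs of Propositions~\ref{prop:<} and~\ref{correttezza_MP_closure}) but does not write down. Second, your forward argument does not correctly cover the assertion case $F=\tip(C)(a)$: there you would need $a^I\in min_{<}(C^I)$ from $a^I\in min_{<_{rc}}(C^I)$, and the inclusion $min_{<}(C^I)\subseteq min_{<_{rc}}(C^I)$ points the wrong way for that. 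The paper's development is really about inclusion queries (and that is the only case it ever invokes), so this edge case is likely not intended; still, you should not claim it ``reduces to the corollary.''
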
 
%\color{blue} {AGGIUNGERE PROVA} \normalcolor

To define a notion of minimal canonical BP-model for $K$, we proceed as in the semantic characterization of the rational closure in Section   \ref{sez:RC}.
Let the a function $d_\emme$ associated with the preference relation $<$ be such that, for any element $x \in \Delta$:
if $x \in min_<(\Delta)$, then $d_\emme(x)=0$; otherwise,
 $d_\emme(x)$ is the length of the longest path $x_0 <x_1< \ldots < x$ from $x$ to an element $x_0$ such that $d_\emme(x)=0$.
 
 %
%Observe that a non modular preference relation $<$, cannot be equivalently defined by postulating the existence of a function $k_{\emme}: \Delta \longmapsto \mathbb{N}$ (as for modular preference relations). Nevertheless, 
Although $<$ is not assumed to be modular, for each domain element $x$,  $d_\emme(x)$ represents the distance of $x$ from the most preferred elements in the model, and can be used for defining the a notion of preference  $\prec_{BP}$ among BP-models of $K$.
Let $Min_{RC}(K)$ be the set of all BP-models $\emme = \langle \Delta, <_{rc} <, I \rangle$ of $K$ such that
$ \langle \Delta, <_{rc}, I \rangle$ is a minimal canonical model of $K$ according to the semantics of rational closure in Section \ref{sez:RC} (Definition \ref{def-minimal-canonical-model-DL}).
Thus, the models in $Min_{RC}(K)$ are those built from the minimal canonical models of the rational closure of $K$.
The minimal (canonical) BP-models of $K$ will be the models in $Min_{RC}(K)$ which also minimize the distance $d_\emme(x)$  of each domain element $x$.

%$Min_{RC}(K)= \{ \emme \;  \mid \;$ $\emme = \langle \Delta, <_{rc} <, I \rangle$ is a BP-model of $K$. and 

\begin{definition}[Minimal canonical BP-Models]\label{minimal_multipreference_models} 
Given two BP-models of $K$, $\emme = \langle \Delta,  <_{rc}, <, I  \rangle$ and $\emme' =
\langle \Delta',   <'_{rc},  <', I' \rangle$ in $Min_{RC}(K)$, \ 
 {\em $\emme'$ is preferred to $\emme$} (written $\emme' \prec_{BP} \emme$) if 
 \begin{itemize}
 \item $\Delta = \Delta'$, $I = I'$, and
\item   for all $x \in \Delta$, $ d_{{\emme'}}(x) \leq d_{{\emme}}(x)$;
\item for some $y \in \Delta$ ,  $ d_{{\emme'}}(y) < d_{{\emme}}(y)$ 
\end{itemize}

A BP-interpretation $\emme$ is a {\em minimal canonical BP-model} of $K$ if $\emme$ is a model of $K$, $\emme \in Min_{RC}(K)$ and there is no $\emme' \in Min_{RC}(K)$  such that  {\small $ \emme' \prec_{BP} \emme$}. 
\end{definition}
We denote by $\models_{BP}^{min}$ entailment with respect to minimal canonical BP-models: for a query $F$,
 $K \models_{BP}^{min} F$ if $F$ is satisfied in all the minimal canonical BP-models of $K$.

Observe that, according to this definition, for computing the minimal (canonical) BP-models of $K$
one first needs to compute the set of the minimal (canonical) models of$K$ which characterize rational closure of $K$.
%so that $k_{\emme,rc}(C)$ will correspond to the rank of a concept in the rational closure.
Then, among such models, one has to select those which are minimal  with respect to $\prec_{BP}$.

Clearly, as minimal canonical BP-models of a KB are minimal ranked models as defined in Section \ref{sez:RC}, %which characterize rational closure,
$\prec_{rc}$ correspond to the preference relation in minimal canonical models of the rational closure, and
 the rank $k_{{\emme},rc}(x)$ of domain elements will be the same as in the minimal models of rational closure.
 Thus, by Theorem  \ref{Theorem_RC_TBox}, the value of $k_{{\emme},rc}(C)$, for any concept $C$, in a minimal canonical BP-model is equal to $\rf(C)$,  the rank assigned to $C$ by the rational closure construction in Section  \ref{sez:RC}. 

The rank of domain elements with respect to $<_{rc}$
is used to determine the preference relation $<$ on domain elements, according to condition (3).
Minimization with respect to $<$ is needed to guarantee that $<$ is minimal, among all the reflexive and transitive preference relations $<$ satisfying condition (3).

Let us consider again Examples  \ref{example-Student} and  \ref{example-Student-new} above.

\begin{example} 
Let us consider the TBox in Example \ref{example-Student}:
\begin{quote}
 $\mathit{ \tip(Student) \sqsubseteq \neg Pay\_Taxes}$\\
 $\mathit{ \tip(WStudent) \sqsubseteq Pay\_Taxes}$\\
 $\mathit{\tip(Student) \sqsubseteq  Smart}$\\
 $\mathit{WStudent \sqsubseteq  Student}$
\end{quote}
In all minimal canonical BP-models $\emme$, $k_{\emme,rc}(\mathit{Student})=0$, while $k_{\emme,rc}(\mathit{WStudent})$
$=k_{\emme,rc}$ $(\mathit{WStudent \sqcap Smart})=$ $k_{\emme,rc}(\mathit{WStudent \sqcap Smart})=1$,
as in the model of the rational closure.
Let $x$ and $y$ be two elements in the domain of $\emme$ such that: $k_{\emme,rc}(x)= k_{\emme,rc}(y)=1$,
 $x \in \mathit{WStudent \sqcap Pay\_Taxes \sqcap Smart}$,  and $y \in $ {\em WStudent}  $\mathit{\sqcap Pay\_Taxes \sqcap \neg Smart}$.
Such elements $x$ and $y$ exist in $\emme$ as $\emme$ is canonical.
As $y$ violates the typicality inclusion $\mathit{\tip(Student) \sqsubseteq  Smart}$, which is satisfied by $x$, and there is no typicality inclusion which is satisfied by $y$ and violated by $x$, by condition (3) in Definition  \ref{multipreference-model-of-K}, it must be that $x <y$.

Hence, in all the minimal canonical models $\emme$ of  the KB, the domain elements $z$ which are instances of $\tip(\mathit{WStudent})$
(and hence must have rank $k_{\emme,rc}(\mathit{z})=1$), not only must be instances of  $\mathit{WStudent \sqcap Pay\_Taxes}$
(as the defeasible inclusion $\mathit{ \tip(WStudent) \sqsubseteq Pay\_Taxes}$ must be satisfied by all the typical working student), but also
must be instances of $\mathit{WStudent \sqcap Pay\_Taxes \sqcap Smart}$, as they are preferred in $\emme$ to $ \mathit{WStudent \sqcap Pay\_Taxes \sqcap \neg Smart}$ elements. Therefore, $\mathit{ \tip(WStudent)}$ $\mathit{ \sqsubseteq Smart}$ holds in $\emme$.
\end{example}
In Example \ref{example-Student} entailment in minimal canonical  BP-models captures the defeasible inclusions which belong to the skeptical closure.
However, this is not the case in general.

\begin{example} \label{exa:SSN}
Let us consider, as a variant of  Example \ref{example-Student-new}, a knowledge base $K=({\cal T},{\cal A})$ with ${\cal A}=\emptyset$
and the following TBox ${\cal T}$:
\begin{quote}
 $\mathit{ \tip(Student) \sqsubseteq  Young}$\\
 $\mathit{ \tip(Student) \sqsubseteq \neg PayTaxes \sqcap  \exists hasSSN.\top}$\\
 $\mathit{ \tip(Employee) \sqsubseteq PayTaxes \sqcap  \exists hasSSN.\top}$\\
 $\mathit{\tip(Student \sqcap Employee) \sqsubseteq  \neg Young}$
\end{quote}
stating that typical students (and typical employee) have a social security number.
As in Example \ref{example-Student-new} in all the minimal canonical BP-model $\emme$ of $K$, we have  $k_{\emme,rc}(\mathit{Student)}$=
$k_{\emme,rc}$ $(\mathit{Employee)}=0$ and $k_{\emme,rc}(\mathit{Student  \sqcap}$ $\mathit{  Employee)}=1$,
as in the rational closure. As  $E_1 =  \mathit{Strict_{\cal T}  \cup \{ \tip(Student \sqcap}$ $\mathit{ Employee) \sqsubseteq  \neg Young} \}$,
% the concepts  $\mathit{Student}$ and $\mathit{Employee}$ have rank $0$, while concept  $\mathit{Student  \sqcap  Employee}$ has rank $1$.
%As we have seen for the KB in Example \ref{example-Student-new}, 
in the skeptical closure  construction:
\begin{align*}
S_0^B =\{ & \mathit{ \tip(Student)  \sqsubseteq \neg PayTaxes \sqcap  \exists hasSSN.\top,} \\
& \mathit{ \tip(Employee) \sqsubseteq PayTaxes \sqcap  \exists hasSSN.\top}\}
\end{align*}
and the set $S_0^B$ %contains conflicting defaults about tax payment, and
is not (globally) compatible with $ \mathit{Student \sqcap}$ $\mathit{  Employee}$ and $E_1$, so that $S^{sk,B} = E_1$.
Hence, $\tip(\mathit{Student  \sqcap Employee) \sqsubseteq  \exists hasSSN.\top}$ is not in the skeptical closure of the KB.
However, it is easy to see that this defeasible inclusion is satisfied in all the minimal canonical  BP-models $\emme$ of $K$.
i.e.,  $K \models_{BP}^{min} \tip(\mathit{Student  \sqcap Employee) }$ $\mathit{\sqsubseteq  \exists hasSSN.\top}$.

To see why $K \models_{BP}^{min} \tip(\mathit{Student  \sqcap Employee) \sqsubseteq  \exists hasSSN.\top}$, let $\emme = \langle \Delta, <_{rc}, <, I \rangle$ be a minimal canonical  BP-model of $K$ and let 
\begin{align*}
y \in \tip((\mathit{Student  \sqcap Employee))^I} &= min_{<}(\mathit{Student  \sqcap Employee})^I\\
& \subseteq min_{<_{rc}}(\mathit{Student} \mathit{ \sqcap Employee})^I
\end{align*}
(the last inclusion holds by Corollary  \ref{corollary}). 
We show that $y \in (\mathit{\exists hasSSN.\top)^I}$.
By contradiction, suppose that $y \not \in (\mathit{\exists hasSSN.\top)^I}$. As $\mathit{y \in (Student \sqcap \neg \exists hasSSN.\top)^I}$ $y$ violates both the second and the third defeasible inclusions in ${\cal T}$.
In the canonical model $\emme$ there must be an element $x \in min_{<_{rc}}(\mathit{Student  \sqcap Employee})^I$ such that
$x \in \mathit{(PayTaxes \sqcap  \exists hasSSN.\top)^I}$, so that $x$ does not violate the second  defeasible inclusion  $\mathit{ \tip(Student) \sqsubseteq \neg PayTaxes \sqcap  \exists hasSSN.\top}$, which is violated by $y$.
Also, $x$ satisfies  the inclusions in $E_1$, so that there is no inclusion which is violated by $x$ and not by $y$. Hence,
$x<y$ must hold in $\emme$, by condition (3) of Definition  \ref{multipreference-model-of-K},
which contradicts the hypothesis that $y \in \tip((\mathit{Student  \sqcap Employee))^I}$.
\end{example}

The example above shows that entailment in minimal canonical  BP-models is too strong for providing a characterization of the skeptical closure:
$\tip(\mathit{Student  \sqcap Employee)}$ $\mathit{ \sqsubseteq  \exists hasSSN.\top}$ is minimally entailed by $K$, but it is not in the skeptical closure of $K$. 
In the next section we consider a stronger closure construction, which is characterized by minimal canonical BP-models
%that minimal BP-entailment corresponds to a stronger closure construction which was proposed in \cite{Multipref_arXiv2017}. 
and, from this result, in Section \ref{sec:semantics_Skeptical_closure} we can provide a semantics for the skeptical closure. % building on the semantics of minimal canonical  BP-models.

  \section{Correspondence between BP-models and  a variant of lexicographic closure} \label{sez:lex_closure}
  
In this section we show that the semantics of minimal canonical BP-models introduced in the previous section provides a characterization of the 
multipreference closure  (MP-closure, for short), introduced in \cite{Multipref_arXiv2017} as a variant of the lexicographic closure  \cite{Lehmann95,Casinistraccia2012}.
%by Lehmann  \cite{Lehmann95}. % and extended to $\alc$ by Casini and Straccia \cite{Casinistraccia2012}.
More precisely, the MP-closure has been show to provide a sound approximation of the multipreference semantics introduced in  \cite{Multipref_arXiv2017}, 
a refinement of the rational closure semantics to cope with the ``all or nothing" problem.

%The variant we consider, called multipreference closure (MP-closure, for short), has been introduced in \cite{Multipref_arXiv2017} as an approximation of a multipreference semantics, an enhancement of the refinement of rational closure semantics proposed in \cite{GliozziAIIA2016}.
In the following we recap the definition of MP-closure and we prove that the typicality inclusions which hold in the MP-closure are those entailed from the KB under the minimal canonical BP-models semantics defined in section \ref{sez:BPsem}, which thus provides a sound and complete characterization of the MP-closure.

Let $B$ be a concept with rank $k$. Informally, we want to consider all the possible maximal sets of typicality inclusions $S$ which are compatible with $E_k$ and with $B$, i.e. the maximal sets of defeasible properties that a $B$ element can enjoy besides those in $E_k$.
For instance, in Example \ref{example-Student-new}, if $\mathit{B=Student  \sqcap  Employee}$, with $\rf(B)=1$, we have two possible alternative ways of maximally extending the set $E_1$, containing the defeasible inclusion $ \mathit{\tip( Student \sqcap Employee) \sqsubseteq \neg Young}$:  either with the defeasible inclusion  $\mathit{ \tip(Student) \sqsubseteq \neg PayTaxes}$
or with the defeasible inclusion $ \mathit{\tip( Employee) \sqsubseteq }$ $\mathit{PayTaxes}$.
As we have seen in Example \ref{example-Student-new}, these two defeasible inclusions are conflicting, and in the skeptical closure we do not accept any of them. However, 
here we consider all alternative maximally consistent scenarios, compatible with the fact that the concept $\mathit{B=Student  \sqcap}$ $\mathit{  Employee}$
is nonempty. In none of these scenarios the defeasible property that normally students are young %($\mathit{ \tip(Student) \sqsubseteq  Young}$) 
can be accepted, as it is incompatible with the more specific property that normally students which are epmployee are not young.

 Let  $\delta(E_i)$ be the set of typicality inclusions contained in $E_i$ (i.e. those defeasible inclusions with rank $\geq i$) and
 let $D_i= \delta(E_i) -\delta(E_{i+1})$  be the set of typicality inclusions with rank $i$.
% and $D_0,D_1, \ldots, D_n$ be such that 
Observe that $\delta(E_0)= \delta({\cal T})$.
Given a set $S$ of typicality inclusions of the TBox,  we let: $S_i= S \cap D_i$, for all ranks $i=0,\ldots, n$ in the rational closure, 
thus defining a partition of the typicality inclusions with finite rank in $S$, according to their rank\footnote{Observe that, we can ignore the defeasible inclusions with infinite rank when we consider a set of defaults maximally compatible with a concept $B$ (with rank $k$) and with $E_K$, as all the defeasible inclusions with infinite rank already belong to $E_k$}.
We introduce a preference relation among sets of typicality inclusions as follows:
$S' \prec S$ ({\em $S'$ is preferred to $S$})  if and only if
there is an $h$ such that, $S_h\subset S'_h$ and, for all $j > h$, $S'_j=S_j$.
The meaning of $S' \prec S$ is that, considering the highest rank $h$ in which $S$ and $S'$ do not contain the same defeasible inclusions,
 $S'$ contains more defeasible inclusions in $D_h$ than $S$.
 
 The preference relation $\prec$ introduced above differs from the one used in the  lexicographic closure as the lexicographical order in \cite{Lehmann95,Casinistraccia2012} %compares the size of $S_h$ and $S'_h$ for each rank.
considers the size of the sets of defaults for each rank.
%$S_h$ and $S'_h$, for each rank $h$. %(replacing condition $|S_h| <|S'_h|$, to the subset inclusion $S_h\subset S'_h$).
Here, 
the comparison of the sets of defeasible inclusions with the same rank %($S_i$ and $S'_i$) 
is based on subset inclusion ($S_h\subset S'_h$) and on equality among sets ($S'_j=S_j$) rather than on comparing the size of the sets (e.g., $|S_h| <|S'_h|$ or $|S'_j|=|S_j|$), as in the lexicographic closure.
%The notion of  lexicographic closure has been extended to the description logic $\alc$ by Casini and Straccia in \cite{Casinistraccia2012}.
For this reason,  the partial order relation $\prec$ is not necessarily modular,
which fits with the fact that in  BP-interpretations, the partial order relation $<$ is not required to be modular.

 \begin{definition}[ \cite{Multipref_arXiv2017}]
Let $B$ be a concept such that $\rf(B)=k$ and
let $S \subseteq \delta(TBox)$.
$ S  \cup E_k $ is a  {\em maximal set of defeasible inclusions compatible with $B$ in $K$}  %for rank $k$ }
if:
\begin{itemize}
\item
%$ S  \cup E_k \not \models_{\alctr} \tip(\top) \sqsubseteq \neg B$
$ E_k \not \models_{\alctr} \tip(\top) \cap \tilde{S} \sqsubseteq \neg B$
\item
and there is no $S' \subseteq \delta(TBox)$
such that %$ S'  \cup E_k \not \models_{\alctr} \tip(\top) \sqsubseteq \neg B$ 
$E_k \not \models_{\alctr} \tip(\top) \cap \tilde{S'} \sqsubseteq \neg B$ and $S' \prec S$ ($S'$ is preferred to $S$).
\end{itemize}
where $\tilde{S}$ is the materialization of $S$, i.e.,  $\tilde{S}= {\sqcap} \{ (\neg C \sqcup D) \mid  \tip(C) \sqsubseteq D \in S \}$.

%where we let:
%$S' \prec S$ if and only if
%there is an $h$ such that, $(S \cap D_h) )\subset (S' \cap D_h)$ and, for all $j > h$, $(S' \cap D_j)=(S \cap
%D_j)$.
\end{definition}
Informally, $S$ is a maximal set of defeasible inclusions compatible with $B$ and $E_k$ if
%it is not possible to add other defeasible inclusions to $S$ without excluding all $B$-elements, and 
there is no set $S'$ which is consistent with $E_k$ and $B$ and is preferred to $S$ since it contains more specific defeasible inclusions.
The construction is  similar to the lexicographic closure \cite{Lehmann95,Casinistraccia2012}, although, in this case, 
the lexicographic order $\prec$ is different, and it is easy to see that the MP-closure is weaker than the lexicographic closure 
(see Example \ref{exa:lexico-stronger-MP} below).

To check if a subsumption $\tip(B) \sqsubseteq D$ is derivable from the MP-closure of TBox 
we consider all the maximal sets of defeasible inclusions  $S$ that are compatible with  $B$. 
\begin{definition}[ \cite{Multipref_arXiv2017}]
%Let  $\tip(B) \sqsubseteq D$ be a query and let $k=\rf(B)$ be the rank of concept $B$ in the rational closure of ${\cal T}$.
A query $\tip(B) \sqsubseteq D$ {\em follows from the MP-closure of ${\cal T}$} if  
either the rank of concept $B$ in the rational closure of ${\cal T}$ is infinite or $\rf(B)=k$ is finite and
for all the maximal sets of defeasible inclusions  $S$ that are compatible with $B$ in $K$, 
we have:
$$E_k \models_{\alctr} \tip(\top) \sqcap \tilde{S} \sqsubseteq (\neg B \sqcup D)$$
\end{definition}

\normalcolor
Verifying whether a query $\tip(B) \sqsubseteq D$ 
%holds in all the minimal canonical enriched  models of the TBox
is derivable from the MP-closure of the TBox
in the worst case requires to consider a number of
%of $\{ E^{A_1}_{m^1}, \ldots, E^{A_r}_{m^r} \}$,
%to find the maximal $\Delta$ such that $\bigcup \Delta \cup E_k \models_{\alctr} \tip(\top) \sqsubseteq (\neg B \sqcup D)$.
maximal subsets $S$ of defeasible inclusions compatible with  $B$ and $E_k$, which is exponential in the number of typicality inclusions in $K$.
%Hence computing MC-closure has the same complexity as computing the lexicographic colsure
%(although in the last case, the bases are compared using the lexicographic ordering).
%\marginpar{\color{blue} verificare}
As entailment in $\alctr$ can be computed in  \textsc{ExpTime} \cite{AIJ15}, this complexity is still in  \textsc{ExpTime}.
%\marginpar{\color{blue} Mi pare rimanga Exptime - $2^n$ ciamate di costo $O(2^n)$ - verificare con complessita' Abox reasoning in AIJ15}. 
However, in practice, it is clearly less effective than computing subsumption in the skeptical closure of TBox,
which only requires a polynomial number of calls to  entailment  checks in $\alctr$, which can be computed by a linear encoding of an $\alctr$ KB into $\alc$ \cite{ISMIS2015}.

\begin{example} \label{exa:SSN}
Let us consider again the knowledge base $K=({\cal T},{\cal A})$ of  Example \ref{exa:SSN},  with ${\cal A}=\emptyset$
and the following TBox ${\cal T}$:
\begin{quote}
 $\mathit{ \tip(Student) \sqsubseteq  Young}$\\
 $\mathit{ \tip(Student) \sqsubseteq \neg PayTaxes \sqcap  \exists hasSSN.\top}$\\
 $\mathit{ \tip(Employee) \sqsubseteq PayTaxes \sqcap  \exists hasSSN.\top}$\\
 $\mathit{\tip(Student \sqcap Employee) \sqsubseteq  \neg Young}$
\end{quote}
We have seen that the typicality inclusion $\tip(\mathit{Student  \sqcap Employee) \sqsubseteq  \exists hasSSN.\top}$ is not in the skeptical closure of ${\cal T}$, but it holds in all the minimal canonical BP-models of $K$.
We can see that $\tip(\mathit{Student  \sqcap Employee) \sqsubseteq  \exists hasSSN.\top}$ follows from the MP-closure of TBox ${\cal T}$.
In fact, in this example there are two maximal sets of defeasible inclusions compatible with $B= \mathit{Student  \sqcap Employee}$ (where $\rf(B)=1$):
\begin{align*}
%E_1 = & \mathit{Strict_{\cal T}  \cup \{ \tip(Student \sqcap Employee) \sqsubseteq  \neg Young} \} \\
S =\{ & \mathit{ \tip(Student)  \sqsubseteq \neg PayTaxes \sqcap  \exists hasSSN.\top,  \tip(Student \sqcap Employee) \sqsubseteq  \neg Young} \}\\
S' =\{ & \mathit{ \tip(Employee) \sqsubseteq PayTaxes \sqcap  \exists hasSSN.\top,  \tip(Student \sqcap Employee) \sqsubseteq  \neg Young}\}
\end{align*} 
where $S$ is partitioned, according to the ranks of defaults, as follows:
\begin{align*}
S_0 = &\{  \mathit{ \tip(Student)  \sqsubseteq \neg PayTaxes \sqcap  \exists hasSSN.\top}\} \\
S_1= & \{ \mathit{\tip(Student \sqcap Employee) \sqsubseteq  \neg Young}\} \\
S_2 = & \emptyset
\end{align*} 
and $S'$ is partitioned as follows:
\begin{align*}
S'_0 = &\{  \mathit{  \tip(Employee) \sqsubseteq PayTaxes \sqcap  \exists hasSSN.\top}\} \\
S'_1= & \{  \mathit{\tip(Student \sqcap Employee) \sqsubseteq  \neg Young}\} \\
S'_2 = & \emptyset
\end{align*} 
Observe that neither $S\prec S'$ nor $S' \prec S$ and hence both $S$ and $S'$ are maximal sets of defeasible inclusions compatible with $B$.
In this case, $S$ and $S'$ would also correspond to the bases of the lexicographic closure of the KB.
%As $S_0$ and $S'_0$ are incomparable (neither $S_0 \subseteq S'_0$ nor  $S'_0 \subseteq S_0$) then
\end{example}
We refer to  \cite{Multipref_arXiv2017} for further examples concerning the MP-closure.
Before showing the correspondence between the MP-closure and BP-semantics,
let us show an example in which the lexicographic closure allows  conclusions which are not in the MP-closure.
\begin{example} \label{exa:lexico-stronger-MP}
If we modify the knowledge base in Example \ref{exa:SSN} above, by adding to the TBox the typicality inclusion
$\mathit{ \tip(Student) \sqsubseteq \neg PayTaxes \sqcap  Smart}$
we would get again two maximal sets of defeasible inclusions compatible with $B= \mathit{Student  \sqcap Employee}$ in the MP-closure construction:
\begin{align*}
%E_1 = & \mathit{Strict_{\cal T}  \cup \{ \tip(Student \sqcap Employee) \sqsubseteq  \neg Young} \} \\
S =\{ & \mathit{ \tip(Student)  \sqsubseteq \neg PayTaxes \sqcap  \exists hasSSN.\top,} \; \mathit{ \tip(Student) \sqsubseteq \neg PayTaxes \sqcap  Smart,} \\
& \mathit{ \tip(Student \sqcap Employee) \sqsubseteq  \neg Young} \}\\
S' =\{ & \mathit{ \tip(Employee) \sqsubseteq PayTaxes \sqcap  \exists hasSSN.\top,} \;  \mathit{\tip(Student \sqcap Employee) \sqsubseteq  \neg Young}\}
\end{align*} 
However, only  $S$ corresponds to a base in the lexicographic closure, %as $|S'_1| <|S_1|$.
as $S$ contains two defaults with rank 1, while $S'$ contains just one default of rank $1$ (and both $S$ and $S'$ contain the same number of defaults of rank $2$).
%NOTA: Se $S_h \not \subseteq S'_h$ and $S'_h \not \subseteq S_h$, ma $|S_h|=|S'_h|$, $S$ e $S'$ sono inconfrontabili nella MP closure, ma possono essere confrontabili nella lexicographic closure.
\end{example}
%%%%%  FINE MP CLOSURE    %%%%%%%%% 
%%%%%.  MP CLOSURE E BP SEMANTICS    %%%%%%%%

To show that the typicality inclusions derivable form the MP-closure of the KB
are exactly those that hold in all the minimal canonical BP-models of the KB, we prove the following two propositions.
The next one shows that  the MP-closure is sound with respect to the minimal canonical BP-semantics: 
If $\tip(B) \sqsubseteq D$ follows from the MP-closure of TBox, then $TBox \models_{BP}^{min} \tip(B) \sqsubseteq D$.
Let us prove the contrapositive.

\begin{proposition}  \label{correttezza_MP_closure}
Let ${\cal T}$ be a TBox and $B$ a concept with  $\rf(B)=k$ a finite rank in the rational closure construction. 
If there is a minimal canonical BP-model $\emme = \langle \Delta,<_{rc}, <, I \rangle$ of ${\cal T}$ and an element $x \in \Delta$ such that $x \in min_<(B^I) \sqcap \neg D$,
then there is a maximal set of defeasible inclusions $S$ compatible with $B$ in ${\cal T}$, such that 
$$ E_k \not \models_{\alctr} \tip(\top) \sqcap \tilde{S} \sqsubseteq (\neg B \sqcup D)$$
\end{proposition}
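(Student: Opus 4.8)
The plan is to prove the contrapositive of soundness: starting from a minimal canonical BP-model $\emme = \langle \Delta, <_{rc}, <, I \rangle$ of ${\cal T}$ and a witness $x \in min_<(B^I)$ with $x \in (\neg D)^I$, I want to extract a maximal set $S$ of defeasible inclusions compatible with $B$ such that $E_k \not\models_{\alctr} \tip(\top) \sqcap \tilde{S} \sqsubseteq (\neg B \sqcup D)$. The natural candidate is to let $S$ be (essentially) the set of typicality inclusions of ${\cal T}$ that $x$ does \emph{not} violate, i.e. $S = \{ \tip(C) \sqsubseteq D' \in {\cal T} \mid x \in (\neg C \sqcup D')^I \}$ (restricted to inclusions with finite rank, since, as noted in the footnotes, those with infinite rank already lie in $E_k$). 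Since $x$ satisfies each inclusion in $S$, the materialization $\tilde{S}$ holds at $x$; and since $x \in B^I$ and $x \in (\neg D)^I$, the element $x$ itself witnesses, in the reduct $\langle \Delta, <_{rc}, I\rangle$ — a minimal canonical model of the rational closure of $K$ — that $\tip(\top) \sqcap \tilde{S} \sqsubseteq (\neg B \sqcup D)$ fails, provided $x$ has $<_{rc}$-rank $k$, i.e. $x$ satisfies all defeasible inclusions in $E_k$. This last point needs care: I would argue that since $x \in min_<(B^I)$ and, by Corollary~\ref{corollary}, $min_<(B^I) \subseteq min_{<_{rc}}(B^I)$, the element $x$ has $<_{rc}$-rank equal to $k_{\emme,rc}(B) = \rf(B) = k$ (using Theorem~\ref{Theorem_RC_TBox} and the fact that $\emme \in Min_{RC}(K)$), so $x$ does satisfy everything in $E_k$ and indeed contributes to $\tip(\top)^I$ being nonempty "at level $0$" — more precisely $x \in (\tip(\top))^I$ need not hold, but what I actually need is that $E_k$ together with $\tilde{S}$, $B$ and $\neg D$ is jointly satisfiable in a canonical rational model, and $x$ inside $\emme$'s reduct gives exactly that.

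The second and main part is to show that this $S$ is \emph{maximal} with respect to the preference order $\prec$ among sets of defeasible inclusions compatible with $B$. Suppose not: then there is $S' \succ S$ (i.e. $S \prec S'$) which is still compatible with $B$ and $E_k$, meaning there is some highest rank $h$ with $S_h \subsetneq S'_h$ and $S'_j = S_j$ for all $j > h$. So $S'$ contains some typicality inclusion $\tip(C^*) \sqsubseteq D^*$ of rank $h$ that $x$ violates (it is in $S'_h \setminus S_h$, and $S$ was defined as exactly the inclusions $x$ does not violate). The idea is to derive a contradiction with the minimality of $\emme$ among canonical BP-models (minimization of the $d_\emme$-distance), or directly with $x \in min_<(B^I)$: I want to produce, in the canonical model $\emme$, an element $y \in B^I$ that violates \emph{strictly fewer / less specific} defeasible inclusions than $x$ does, in the precise sense required by condition~(3) of Definition~\ref{multipreference-model-of-K}, forcing $y < x$ and contradicting $x \in min_<(B^I)$. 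Concretely, compatibility of $S' \cup E_k$ with $B$ means there is a canonical-model element (hence, by canonicity of $\emme$, an element $y \in \Delta$) lying in $B^I$ and satisfying all of $\tilde{S'} \sqcap E_k$; so $y$ violates only inclusions outside $S' \cup E_k$, in particular only inclusions of rank $< h$ among those $x$ violates and not $y$, while $x$ violates $\tip(C^*)\sqsubseteq D^*$ of rank $h$ which $y$ satisfies. Then matching up the rank comparison $k_{\emme,rc}(C_j) < k_{\emme,rc}(C^*)$ required in condition~(3), we get $y < x$, the desired contradiction.

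The delicate bookkeeping — and the step I expect to be the main obstacle — is exactly this rank-matching argument verifying condition~(3): I must check that for \emph{every} defeasible inclusion $\tip(C_j)\sqsubseteq D_j$ violated by $y$ but not by $x$, there really is an inclusion violated by $x$ but not by $y$ of strictly higher $<_{rc}$-rank, so that condition~(3) fires and yields $y < x$. This requires knowing that the "new" inclusions $y$ might violate that $x$ does not can only have rank $< h$: this should follow because $y$ satisfies all of $E_k$ and all of $S'$, and $S'_j = S_j$ for $j > h$ together with $x$ satisfying $S$ means any inclusion of rank $\ge h$ satisfied by $x$ is in $S_{\ge h} = S'_{\ge h}$ and hence (being in $S'$) satisfied by $y$ too — so $y$'s "extra" violations are confined to rank $< h \le$ rank of $\tip(C^*)\sqsubseteq D^*$. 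One also has to handle carefully the case distinctions around whether the reduct $\langle \Delta, <_{rc}, I\rangle$ of a minimal canonical BP-model is genuinely a minimal canonical model of the rational closure (this is given, since $\emme \in Min_{RC}(K)$), and the infinite-rank inclusions, which are dispensed with as above. Modulo this careful matching of violated-inclusion sets against their rational-closure ranks, the argument is a fairly direct transcription of the $\prec$-maximality of $S$ into the $<$-minimality of $x$ and vice versa.
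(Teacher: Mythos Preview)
Your proposal is correct and follows essentially the same route as the paper's proof: define $S$ as the set of typicality inclusions satisfied by $x$, use the reduct $\langle \Delta,<_{rc},I\rangle$ (a minimal canonical model of the rational closure) together with the rank-$k$ position of $x$ to witness $E_k \not\models_{\alctr} \tip(\top)\sqcap \tilde{S}\sqsubseteq(\neg B\sqcup D)$, and then derive maximality of $S$ by taking a hypothetical better $S'$, pulling a $B$-element $z\in\tilde{S'}^I$ into $\emme$ via canonicity, and invoking condition~(3) to obtain $z<x$, contradicting $x\in\min_<(B^I)$. One minor slip: when you write ``$S'\succ S$ (i.e.\ $S\prec S'$)'' you have the direction of $\prec$ reversed relative to the paper's convention (where $S'\prec S$ means $S'$ is preferred), but your subsequent unpacking $S_h\subsetneq S'_h$ with $S'_j=S_j$ for $j>h$ is the correct condition and the rest of your argument uses it correctly.
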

\begin{proof}
Assume that for some minimal canonical BP-model $\emme = \langle \Delta,<_{rc}, <, I \rangle$ of $K$ there is an element $x \in \Delta$ such that $x \in min_<(B^I) \sqcap \neg D$.
Let us define $S$ as the set of all the defeasible inclusions in TBox which are satisfied in $x$, i.e.

$S = \{ \tip(C) \sqsubseteq E \in \mbox{TBox} \mid \; x \in ( \neg C \sqcup E) \}$. % \mbox{ and } \rf(C) \mbox{ is finite }

\noindent
We show that, $ E_k \not \models_{\alctr} \tip(\top) \sqcap \tilde{S} \sqsubseteq (\neg B \sqcup D)$.

Let $\emme^{RC}= \langle \Delta,<_{rc}, I \rangle$. By construction,  $\emme^{RC}$ is a minimal canonical model of the rational closure of $K$.
By a property of $\emme^{RC}$ (Proposition 12 in \cite{AIJ15}), $\emme_k^{RC}$ (i.e. the model obtained by $\emme$ by collapsing all the element with rank $\leq k$ to rank $0$) satisfies $E_k$: $\emme_k^{RC} \models_{\alctr} E_k$.
Also, as $\rf(B)=k$ and  $x \in \tip(B)^I$, $x$ must have rank $k$ in $\emme^{RC}$, and hence rank $0$ in $\emme_k^{RC}$
(and, clearly, $k_{\emme,rc}(x)=k$ in $\emme$).  
 Thus, $x \in \tip(\top)^I$ holds in $\emme_k^{RC}$, but also $x \in (B \sqcap \tilde{S})^I$ (by definition of $S$). 
 Therefore
$\emme_k^{RC} \not \models_{\alctr} \tip(\top) \sqcap \tilde{S} \sqsubseteq \neg B$.
Hence,
 $E_k \not \models_{\alctr} \tip(\top)  \sqcap \tilde{S}  \sqsubseteq \neg B$,
 i.e. $S$ is a set of defeasible inclusions compatible with $B$.
 
 Furthermore, as $x \in \neg D$,  $E_k \not \models_{\alctr} \tip(\top)  \sqcap \tilde{S} \sqcap \neg D \sqsubseteq \neg B$,
 and hence, $ E_k \not \models_{\alctr} \tip(\top) \sqcap \tilde{S} \sqsubseteq (\neg B \sqcup D)$, i.e.,
 $\tip(B) \sqsubseteq D$ does not follow from the MP-closure of TBox.
 
To show that $S$ is a maximal set of defeasible inclusions compatible with $B$,
we have still to show that $S$ is maximal.
Suppose, by contradiction, it is not. Then there is a set $S'$ such that $S' \prec S$ and  
$E_k \not \models_{\alctr} \tip(\top)  \sqcap \tilde{S'}  \sqsubseteq \neg B$.
Therefore, there must be a $\alctr$ model $\enne= \langle \Delta',<'_{rc} I' \rangle$ of  $E_k$ and an element $y \in \Delta'$, having rank $0$ in $\enne$ such that:
$y \in (\tilde{S'} \sqcap B)^{I'}$. 
%We could extend  $\emme^{RC}$ by adding a new element $y$ in $\Delta$ with rank $0$ and with the same interpretation of atomic concepts as in I'.
%together with all the domain elements $\Delta_y= \{ w : y (\bigcup R_i)^+ w \}$ reachable from $y$ in $\enne$.

As $\emme$ is canonical, then $\emme^{RC}$ is canonical as well. Hence, there must be an element $z \in \Delta$ such that $z \in (\tilde{S'} \sqcap B)^I$
(i.e., the interpretation of all non-extended concepts in $z$ is the same as in $y$ in $\enne$).
As $y$ has rank $0$ in $\enne$, $y$ satisfies all the defeasible inclusions in $E_k$.
Hence, the concept $\tilde{S'} \sqcap B$ must have rank $k$ in the rational closure and, therefore, $z$ must have rank $k$ in $\emme^{RC}$.
%and rank $0$ in $\emme_i^{RC}$. 
Thus, $z \in (\tip(\top) \sqcap \tilde{S'} \sqcap B)^I$ in $\emme_k^{RC}$,
%and  $\emme_k^{RC} \not \models_{\alctr} \tip(\top) \sqcap \tilde{S'} \sqsubseteq \neg B$. Thus,
%$ E_k \not \models_{\alctr} \tip(\top) \sqcap \tilde{S'} \sqsubseteq (\neg B \sqcup D)$.
and, clearly, $k_{\emme,rc}(z)=k$ in $\emme$.

Since $S' \prec S$ there must be some 
$h$ such that, $S_h\subset S'_h$ and, for all $j > h$, $S'_j=S_j$.
Thus, there is some defeasible inclusion $\tip(C') \sqsubseteq E' \in S'$ such that $\tip(C') \sqsubseteq E' \not \in S$.
so that $z$ satisfies $\tip(C') \sqsubseteq E'$ (i.e., $z \in (\neg C' \sqcup  E')^I$, while $x$ violates it (i.e., $x \in (C' \sqcap \neg E')^I$).
On the other hand, all the defeasible inclusion violated by $z$ and not by $x$ cannot have rank $\geq h$, as $x$ satisfies %exactly 
only the inclusions $S$ (by definition of $S$) and  
and, for all $j \geq h$, $S_j'= S_j$ (the typicality inclusions with infinite rank are trivially satisfied both in $x$ and in $z$).

Therefore, $z <x$ holds in $\emme$ by condition (3), and $x$ cannot be a typical $B$ element, thus contradicting the hypothesis.
%$\hfill \bbox$
\end{proof}

The next proposition shows that  the MP-closure is complete with respect to the minimal canonical BP-semantics: 
If $TBox \models_{BP}^{min} \tip(B) \sqsubseteq D$, then $\tip(B) \sqsubseteq D$ follows from the MP-closure of TBox. 
Let us prove the contrapositive.

\begin{proposition} \label{completezza_MP_closure}
${\cal T}$ be a TBox and $\tip(B) \sqsubseteq D$ a defeasible inclusion
such that $\rf(B)=k$ is a finite rank in the rational closure. 
If $\tip(B) \sqsubseteq D$ does not follow from the MP-closure of ${\cal T}$,
%If there is a maximal set of defeasible inclusions $S$ compatible with $B$ in $K$, such that 
%$$ E_k \not \models_{\alctr} \tip(\top) \sqcap \tilde{S} \sqsubseteq (\neg B \sqcup D)$$
then there is a minimal canonical MP model $\emme = \langle \Delta,<_{rc}, <, I \rangle$ of ${\cal T}$ and an element $x \in \Delta$ such that $x \in min_<(B^I) \sqcap \neg D$.
\end{proposition}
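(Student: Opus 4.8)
The plan is to prove the contrapositive by an explicit construction, dual to the proof of Proposition~\ref{correttezza_MP_closure}: I would take a minimal canonical model of the rational closure, equip it with the least preference relation $<$ satisfying condition~(3) of Definition~\ref{multipreference-model-of-K}, and exhibit a $B$-element that is $<$-minimal in $B^I$ and falsifies $D$.

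Concretely, since $\tip(B)\sqsubseteq D$ does not follow from the MP-closure and $\rf(B)=k$ is finite, fix a maximal set $S$ of defeasible inclusions compatible with $B$ in ${\cal T}$ with $E_k\not\models_{\alctr}\tip(\top)\sqcap\tilde S\sqsubseteq(\neg B\sqcup D)$ (inclusions of infinite rank being ignored throughout, as in the MP-closure definition). Let $\emme^{RC}=\langle\Delta,<_{rc},I\rangle$ be a minimal canonical model of the rational closure of $K$. First I would observe that the concept $F=\tilde S\sqcap B\sqcap\neg D$ has rank exactly $k$: $\rf(F)\geq\rf(B)=k$ since $F\sqsubseteq B$, and $\rf(F)\leq k$ since the displayed non-entailment says precisely that $F$ is not exceptional for $E_k$. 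By Theorem~\ref{Theorem_RC_TBox} the rank of $F$ in $\emme^{RC}$ equals $k$, so there is $x\in\Delta$ with $x\in F^I$ and $k_{\emme,rc}(x)=k$; in particular $x\in(\neg D)^I$. Writing $S_x$ for the set of (finite-rank) typicality inclusions of ${\cal T}$ satisfied by $x$, we have $S\subseteq S_x$ (as $x\in\tilde S^I$), and $S_x$ is itself compatible with $B$: collapsing all elements of rank $\leq k$ in $\emme^{RC}$ to rank $0$ gives, by a property of canonical rational-closure models \cite{AIJ15}, a model of $E_k$ in which $x$ has rank $0$ and lies in $(\tilde{S_x}\sqcap B)^I$, whence $E_k\not\models_{\alctr}\tip(\top)\sqcap\tilde{S_x}\sqsubseteq\neg B$. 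Since $S_x\supsetneq S$ would give $S_x\prec S$, maximality of $S$ forces $S_x=S$.

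Next I would extend $\emme^{RC}$ to $\emme=\langle\Delta,<_{rc},<,I\rangle$ with $<$ the least irreflexive, transitive, well-founded relation satisfying condition~(3); then $\emme\in Min_{RC}(K)$, and minimality of $<$ makes $d_\emme$ pointwise minimal, so $\emme$ is a minimal canonical BP-model of ${\cal T}$. It remains to show $x\in min_<(B^I)$. Suppose $z\in B^I$ with $z<x$; since $<_{rc}\subseteq<$ (Proposition~\ref{prop:<}) is modular with rank function $k_{\emme,rc}$, $z$ cannot have rank above $k$ (else $x<_{rc}z$, whence $x<z$ and, with $z<x$, $x<x$), and $z\in B^I$ gives rank $\geq k$, so $k_{\emme,rc}(z)=k$ and, by the same collapsing argument, $S_z$ (the set of inclusions satisfied by $z$) is compatible with $B$. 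The key point is that whenever $u<v$ is directly forced by~(3) one has $S_v\prec S_u$: the inclusions violated by $u$ and not $v$ form $S_v\setminus S_u$, those violated by $v$ and not $u$ form $S_u\setminus S_v$, and $k_{\emme,rc}(\cdot)=\rf(\cdot)$ on concepts, so~(3) says every inclusion of $S_v\setminus S_u$ is strictly dominated in rank by one of $S_u\setminus S_v$; hence the highest rank $h$ at which $S_u,S_v$ differ is occupied by a member of $S_u\setminus S_v$ and by none of $S_v\setminus S_u$, i.e. $(S_v)_h\subsetneq(S_u)_h$ with $(S_u)_j=(S_v)_j$ for all $j>h$, which is exactly $S_v\prec S_u$. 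As $z<x$ is witnessed by a finite chain $z=z_0<\cdots<z_m=x$ of directly-forced pairs, transitivity of $\prec$ yields $S_z\prec\cdots\prec S_x=S$, so $S_z$ is compatible with $B$ and $S_z\prec S$, contradicting maximality of $S$. Hence $x\in min_<(B^I)$, which together with $x\in(\neg D)^I$ is the required element.

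The main obstacle is this last step and its bookkeeping: one has to check that the premises of condition~(3) really produce the strict-inclusion/equality pattern defining $\prec$ in all degenerate sub-cases (e.g.\ when $S_u\setminus S_v$ or $S_v\setminus S_u$ is empty), that $\prec$ is transitive so that forced chains compose into a single $\prec$-step, and that the least relation satisfying~(3) is a legitimate well-founded preference relation whose comparabilities between domain elements are witnessed by finite forced chains — all of which rest on the rank structure inherited from $<_{rc}$, exactly as in the rational-closure semantics of Section~\ref{sez:RC}.
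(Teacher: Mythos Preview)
Your proposal is correct and follows essentially the same construction as the paper: start from a minimal canonical rational-closure model, equip it with the least $<$ forced by condition~(3) of Definition~\ref{multipreference-model-of-K} (the paper phrases this as the transitive closure of the relation $<^1$ given by the antecedent of~(3)), and use maximality of $S$ to show that the witnessing element is $<$-minimal in $B^I$. You are in fact slightly more careful than the paper in two places: you explicitly argue $S_x=S$, and you correctly note that $z<x$ may arise via a finite chain of directly-forced steps and invoke transitivity of $\prec$ to obtain $S_z\prec S_x=S$, whereas the paper's proof tacitly treats $z<y$ as if it were a single $<^1$-step.
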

\begin{proof}
If $\tip(B) \sqsubseteq D$ does not follow from the MP-closure of ${\cal T}$,
then there is a maximal set of defeasible inclusions $S$ compatible with $B$ in $K$, such that 
$$ E_k \not \models_{\alctr} \tip(\top) \sqcap \tilde{S} \sqsubseteq (\neg B \sqcup D).$$
Then
$$ E_k \not \models_{\alctr} \tip(\top)  \sqsubseteq \neg ( \tilde{S} \sqcap B \sqcap \neg D)$$
and concept $\tilde{S} \sqcap B \sqcap \neg D$  is not exceptional with respect to $E_k$
and, in the rational closure, it must have rank less or equal to $k$.
As $\rf(B)=k$, it must be  $\rf(\tilde{S} \sqcap B \sqcap \neg D)=k$.

Let us consider any minimal canonical $\alctr$ model $\enne  = \langle \Delta',<_{RC}, I' \rangle$ of $K$.
As $\rf(\tilde{S} \sqcap B \sqcap \neg D)=k$, by Proposition 13 in \cite{AIJ15}, 
the concept $\tilde{S} \sqcap B \sqcap \neg D$ must have rank $k$ in any minimal canonical model of $K$. 
Therefore,
$k_{\enne}(\tilde{S} \sqcap B \sqcap \neg D)=k$,
and there is an element $y \in \Delta$ such that $y \in (\tilde{S} \sqcap B \sqcap \neg D)^{I'}$ and $k_\enne(y)=k$.

From $\enne$ we build a minimal canonical MP model $\emme = \langle \Delta,<_{rc}, <, I \rangle$  falsifying $\tip(B) \sqsubseteq D$
as follows.
We let $\Delta= \Delta'$, $I=I'$ and $<_{rc}= <_{RC}$.
We define $<$ as the transitive closure of $<^1$, where
$x <^1 y$ is true if and only if the antecedent of condition (3) in Definition \ref{multipreference-model-of-K} holds, that is:
\begin{quote} 
$x<^1 y$ {\bf if and only if}\\
$\;$ \mbox{\ \ \ \ }  there is some $\tip(C) \sqsubseteq F \in K$ which is violated by $y$ and, \\
$\;$ \mbox{\ \ \ \ }  for all $\tip(C_j) \sqsubseteq D_j \in K$, which is violated by $x$ and not by $y$, \\
$\;$ \mbox{\ \ \ \ }   there is a $\tip(C_k) \sqsubseteq D_k \in K$,  which is violated by $y$ and not by $x$,  and \\
$\;$ \mbox{\ \ \ \ }  $\;$ \mbox{ \ \ \ \ \ \ \ \ \ \ \ \ \ \ \ \ \ \ \ \ \ \ \ \ \ \ \ \ \ \ \ \ \ \ \ \ \ \ \ \ \ \ \ \ \ \ \ \ } $k_{\emme,rc}(C_j) < k_{\emme,rc}(C_k)$.
\end{quote}
Observe that, for all concepts $C$, $k_{\emme,rc}(C)=k_{RC}(C)= \rf(C)$, the rank of $C$ in the rational closure.
We have to show that $\emme$ is a minimal canonical MP model of $K$ and that $y \in (\tip(B) \sqcap \neg D)^I$.

We first show that $\emme$ is an MP model of $K$,  that it is canonical and that it is minimal among the canonical MP models of $K$.
To show that $\emme$ is an MP model of $K$, we observe that, by definition of $<$, condition $(3)$ in Definition \ref{multipreference-model-of-K} holds for $\emme$ by construction.

It can be easily seen that $\emme$ satisfies the assertions in  ABox and the strict inclusions $C \sqsubseteq E$ in TBox, since
$\enne$ does, $\Delta= \Delta'$ and $I=I'$.
To show that $\emme$ is an MP model of $K$, we have also to show that 
for all $\tip(C) \sqsubseteq {E}$ in TBox,
 $min_{<_{rc}}({C}^{I}) \subseteq {E}^{I}$ holds.
It follows from the fact that $min_{<_{RC}}({C}^{I'}) \subseteq {E}^{I'}$ holds  in $\enne$
and that, by definition of $\emme$, $<_{rc}=<_{RC}$ and $I=I'$.

We show that $\emme$ is a canonical BP model of $K$:
If not, there are $C_1, C_2, \dots, C_n$ such that
$K \not\models_{\alctrbp} C_1 \sqcap C_2 \sqcap \dots \sqcap C_n \sqsubseteq \bot$, but there is no $x \in \Delta$ such that
$x \in (C_1 \sqcap C_2 \sqcap \dots \sqcap C_n)^{I}$.
By Theorem  \ref{equivalence-multiple3},
$K \not\models_{\alctr} C_1 \sqcap C_2 \sqcap \dots \sqcap C_n \sqsubseteq \bot$
This would contradict the hypothesis that $\enne$ is an $\alctr$ canonical model of $K$.

We have to show that  $\emme$ is minimal among the canonical BP models of $K$.
If, by absurdum, $\emme$ were not a minimal canonical BP model, then there would be a BP model $\emme'' = \langle \Delta'',  <''_{rc}, <'', I''  \rangle$
in $Min_{RC}(K)$,
such that $\Delta'' = \Delta$, $I'' = I$,  and $\emme'' \prec_{BP} \emme$.
Observe that the relation $<''_{rc}$ in $\emme''$ must be equal to $<_{rc}$, as it is determined by a minimal canonical $\alctr$ model (and hence by the rational closure of TBox).

Concerning $<''$, as $\emme''$ is an BP interpretation, $<''$ must be transitive and contain $<^1$.
Hence, $<''$ must contain the transitive closure of $<^1$.
As $<$ is defined as the transitive closure of $<^1$, it must be $< \subseteq <''$,
which contradicts the hypothesis that that $\emme'' \prec_{BP} \emme$.

Finally, we want to show that  $y \in (\tip(B) \sqcap \neg D)^I$.
We have seen that in $\enne$ there is an element $y \in \Delta$ such that $y \in (\tilde{S} \sqcap B \sqcap \neg D)^{I'}$ and $k_\enne(y)=k$.
By construction of $\emme$, $I=I'$ and then $y \in (B \sqcap \neg D)^I$.
Furthermore, $<_{rc}=<_{RC}$ and, hence, $k_{\emme,rc}(y)=k_{\enne}(y)=k$ and, also, $k_{\emme,rc}(B)=k_{\enne}(B)=\rf(B)=k$.

To see that $y \in min_<(B)$, we need to show that there is no $z \in \Delta$ such that $z \in B^I$ and $z<y$.
Suppose by contradiction that there is such a $z$. As $z$ is a $B$-element, it cannot have rank less than $k$ in the rational closure.
Hence, it must be $k_{\emme,rc}(z)=k$.

Let $S'$ be the set of defeasible inclusions satisfied by $z$,
i.e., $S' = \{ \tip(C) \sqsubseteq E \in \mbox{TBox} \mid \; z \in ( \neg C \sqcup E)\}$ .
Then $z \in (\tilde{S'} \sqcap B)^I$.
Let $\emme^{RC}= \langle \Delta,<_{rc}, I \rangle$ be the $\alctr$ model obtained from $\emme$, ignoring the preference relation $<$.
By Proposition 12 in \cite{AIJ15}, $\emme_k^{RC}\models_{\alctr} E_k$ and, 
as $k_{\emme,rc}(z)=k$,  $z$ must have rank $0$ in  $\emme_k^{RC}$. Therefore,
$$ E_k \not \models_{\alctr} \tip(\top) \sqcap \tilde{S'} \sqsubseteq \neg B.$$

As $z <y$, for all defeasible inclusions $\tip(C_j) \sqsubseteq A_j \in K$ violated by $z$ and satisfied by $y$,
there is a more specific defeasible inclusion $\tip(C_k) \sqsubseteq A_k \in K$ violated by $y$ and satisfied by $z$ 
(that is $k_{\emme,rc}(C_j)<k_{\emme,rc}(C_k)$).
Suppose that $j$ is the rank of the defeasible inclusion with highest rank violated by $z$
and that $h$ is the rank of the defeasible inclusion with highest rank violated by $y$.
It must be $j<h$. Therefore, $S_h \subset S'_h$ (as $z$ satisfies all the defeasible inclusions of rank $h$).
Therefore, $S'$ is preferred to $S$, $S' \prec S$.
However, this contradicts the hypothesis that $S$ is a maximal set of defeasible inclusions compatible with $B$ in $K$.
Therefore, a $z$ with $z<y$ cannot exist and $y \in \tip(B)^I$, so that $y \in (\tip(B) \sqcap \neg D)^I$.
%$\hfill \bbox$
\end{proof}

We can now establish a correspondence between the minimal canonical MP models semantics and the MP closure.
\begin{theorem} \label{th:BP-MP}
Given a knowledge base $K=({\cal T}, {\cal A})$ and a query $\tip(B) \sqsubseteq D$,
${\cal T} \models_{BP}^{min} \tip(B) \sqsubseteq D$
if and only if $\tip(C) \sqsubseteq D$ follows from the MP-closure of the TBox ${\cal T}$.
%$\Delta \cup E_k \models_{\alctr} \tip(\top) \sqsubseteq (\neg B \sqcup D)$, where $k=\rf(B)$. 
\end{theorem}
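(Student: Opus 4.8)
The plan is to obtain the theorem almost immediately from Propositions~\ref{correttezza_MP_closure} and~\ref{completezza_MP_closure}, after disposing of the degenerate case and recording a simple observation that links the two formulations of ``failure''. First I would treat $\rf(B)=\infty$ separately: in this case $B$ is not satisfiable in any $\alctr$ model of ${\cal T}$ (as recalled after Theorem~\ref{Theorem_RC_TBox}), hence $B^I=\emptyset$ in every BP-model as well, so $\tip(B)\sqsubseteq D$ holds vacuously in all minimal canonical BP-models, while $\tip(B)\sqsubseteq D$ follows from the MP-closure of ${\cal T}$ directly by definition; both sides hold, and the equivalence is trivial. From now on assume $\rf(B)=k$ is finite.

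Next I would record the bridging fact. Since $(\tip(B))^I=min_<(B^I)$ in any BP-interpretation, a minimal canonical BP-model $\emme$ falsifies $\tip(B)\sqsubseteq D$ if and only if there is an element $x\in\Delta$ with $x\in min_<(B^I)$ and $x\notin D^I$, i.e. $x\in min_<(B^I)\sqcap\neg D$ in the notation of the two propositions (Corollary~\ref{corollary} is what guarantees $min_<(B^I)$ is nonempty whenever $B$ is satisfiable, so this reading is faithful). Consequently ${\cal T}\not\models_{BP}^{min}\tip(B)\sqsubseteq D$ iff some minimal canonical BP-model of ${\cal T}$ contains an element of $min_<(B^I)\sqcap\neg D$.

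For the direction ``if $\tip(B)\sqsubseteq D$ follows from the MP-closure then ${\cal T}\models_{BP}^{min}\tip(B)\sqsubseteq D$'' I would argue by contraposition. If ${\cal T}\not\models_{BP}^{min}\tip(B)\sqsubseteq D$, the bridging fact gives a minimal canonical BP-model $\emme$ and an element $x\in min_<(B^I)\sqcap\neg D$; Proposition~\ref{correttezza_MP_closure} then produces a maximal set $S$ of defeasible inclusions compatible with $B$ in ${\cal T}$ such that $E_k\not\models_{\alctr}\tip(\top)\sqcap\tilde{S}\sqsubseteq(\neg B\sqcup D)$, which by the definition of the MP-closure means that $\tip(B)\sqsubseteq D$ does \emph{not} follow from it --- a contradiction. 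For the converse I would again argue by contraposition, now from Proposition~\ref{completezza_MP_closure}: if $\tip(B)\sqsubseteq D$ does not follow from the MP-closure, that proposition yields a minimal canonical BP-model $\emme$ and an $x\in min_<(B^I)\sqcap\neg D$, so by the bridging fact $\emme$ falsifies $\tip(B)\sqsubseteq D$ and ${\cal T}\not\models_{BP}^{min}\tip(B)\sqsubseteq D$. Combining the two implications with the $\rf(B)=\infty$ case gives the stated equivalence.

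There is essentially no hard step here: the mathematical content lies entirely in the two propositions already established. The only points requiring care are the vacuous infinite-rank case and the routine translation, via $(\tip(B))^I=min_<(B^I)$ and Corollary~\ref{corollary}, between ``$\tip(B)\sqsubseteq D$ fails in some minimal canonical BP-model'' and ``some minimal canonical BP-model contains an element of $min_<(B^I)\sqcap\neg D$''; one should also check that the minimal canonical BP-models quantified over in the two propositions are literally those of Definition~\ref{minimal_multipreference_models}, so that the same class of models appears on both sides of the equivalence.
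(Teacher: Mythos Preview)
Your proposal is correct and follows essentially the same approach as the paper: both argue by contraposition from Propositions~\ref{correttezza_MP_closure} and~\ref{completezza_MP_closure}, dispatching the $\rf(B)=\infty$ case separately (the paper does this in one line, you do it first and more explicitly). One small inaccuracy: your parenthetical appeal to Corollary~\ref{corollary} for nonemptiness of $min_<(B^I)$ is misplaced (that corollary says something else, and in any case the bridging fact you need is immediate from $(\tip(B))^I=min_<(B^I)$ without any nonemptiness assumption), but this does not affect the argument.
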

\begin{proof}
The proof of this result can be done by contraposition and is an easy consequence of Proposition  \ref{correttezza_MP_closure} and Proposition  \ref{completezza_MP_closure}.
Just observe that, for the ``If" part, when ${\cal T} \not \models_{BP}^{min} \tip(B) \sqsubseteq D$, concept $B$ must have a finite rank, %$\rf(B)\neq \infty$, 
otherwise $\tip(B) \sqsubseteq D$ would be a logical consequence of ${\cal T}$, for any concept $D$.
For the ``Onfy if" part, when $\tip(C) \sqsubseteq D$ does not follow from the MP-closure of the TBox ${\cal T}$, the rank of $B$ in the rational closure must be finite.
\end{proof}
In \cite{Multipref_arXiv2017} we have shown that the MP-closure provides a sound approximation of a multipreference semantics, the S-enriched semantics. From the correspondence result above (Theorem \ref{th:BP-MP}), it also follows that entailment with respect to the minimal canonical BP-models (as defined in Section \ref{sez:BPsem}) is strictly weaker than entailment with respect to the minimal canonical S-enriched models defined in \cite{Multipref_arXiv2017}.

 \section{A semantic characterization for the skeptical closure} \label{sec:semantics_Skeptical_closure}

First we show that we can equivalently reformulate the notion of global compatibility of a set of defeasible inclusions (Definition \ref{def:global_compatibility}), as stated by the following property:
\begin{proposition}
Let ${\cal T}$ be a TBox and $B$ be a concept with finite $\rf(B)=k$.
Given two sets of defeasible inclusions $S$ and $S'$,
 {\em $S$ is (globally) compatible with $B$ w.r.t.  $E_k \cup S'$} if and only if
$$ E_k    \not \models_{\alctr} \tip(\top)  \sqcap \tilde{S} \sqcap  \tilde{S'} \sqsubseteq \neg B $$
where $\tilde{S}$ is the materialization of $S$, i.e.,  $\tilde{S}= {\sqcap} \{ (\neg C \sqcup D) \mid  \tip(C) \sqsubseteq D \in S \}$.
\end{proposition}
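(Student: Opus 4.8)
The plan is to unfold both sides of the claimed equivalence into statements about $\alctr$ entailment and reduce them to one another by exploiting the semantics of the typicality operator together with the standard correspondence between a set of defeasible inclusions and its materialization. First I would recall Definition~\ref{def:global_compatibility}: \emph{$S$ is globally compatible with $B$ w.r.t.\ $E_k \cup S'$} means precisely $E_k \cup S \cup S' \not\models_{\alctr} \tip(\top) \sqsubseteq \neg B$. So the task is to show that this fails exactly when $E_k \not\models_{\alctr} \tip(\top) \sqcap \tilde{S} \sqcap \tilde{S'} \sqsubseteq \neg B$. Contrapositively, it suffices to prove the equivalence of the two \emph{positive} entailments $E_k \cup S \cup S' \models_{\alctr} \tip(\top) \sqsubseteq \neg B$ and $E_k \models_{\alctr} \tip(\top) \sqcap \tilde{S} \sqcap \tilde{S'} \sqsubseteq \neg B$.

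The key technical lemma I would isolate (or cite, since it is implicitly used already in the MP-closure development, e.g.\ around Proposition~\ref{correttezza_MP_closure}) is: for any model $\emme$ of $E_k$ and any domain element $x$ of rank $0$ in $\emme$, $x \in \tilde{S}^I$ if and only if $x$ satisfies every typicality inclusion in $S$ (i.e.\ $x \in (\neg C \sqcup D)^I$ for each $\tip(C)\sqsubseteq D \in S$); this is immediate from the definition of materialization $\tilde{S} = \sqcap\{\neg C \sqcup D \mid \tip(C)\sqsubseteq D \in S\}$, and does not even need rank $0$. The role of rank $0$ enters through the other standard fact: an $\alctr$ model $\emme$ of $E_k$ satisfies an additional typicality inclusion set (like $S \cup S'$) exactly when every rank-$0$ element of $\emme$ satisfies the corresponding materializations --- more precisely, using the ``collapsed'' model $\emme_k$ (Proposition~12 in \cite{AIJ15}) one can pass freely between ``$\tip(\top)$ at rank $0$ of a model of $E_k$'' and ``$\tip(\top)$ in a model of $E_k \cup S \cup S'$.''

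With those two facts the two directions are routine. For $(\Leftarrow)$: suppose $E_k \cup S \cup S' \not\models_{\alctr} \tip(\top) \sqsubseteq \neg B$; take a witnessing model with a typical ($\tip(\top)$) element $x$ that is a $B$-element; since the model satisfies $S$ and $S'$ and $x$ is among the minimal elements, $x$ satisfies all materializations, so $x \in (\tip(\top) \sqcap \tilde{S} \sqcap \tilde{S'} \sqcap B)^I$, viewing the same structure (forgetting that it satisfies $S,S'$) as a model of $E_k$; hence $E_k \not\models_{\alctr} \tip(\top)\sqcap\tilde{S}\sqcap\tilde{S'}\sqsubseteq\neg B$. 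For $(\Rightarrow)$: suppose $E_k \not\models_{\alctr} \tip(\top) \sqcap \tilde{S}\sqcap\tilde{S'} \sqsubseteq \neg B$; take a model $\emme$ of $E_k$ with a rank-$0$ element $x$ in $(\tilde{S}\sqcap\tilde{S'}\sqcap B)^I$; collapse to $\emme_k$ (still a model of $E_k$, still with $x$ at rank $0$) and then observe that because $x$ satisfies all materializations of $S\cup S'$ one can build a model of $E_k \cup S \cup S'$ --- e.g.\ by taking the submodel/minimal-rank variant in which the minimal elements are exactly those satisfying $\tilde{S}\sqcap\tilde{S'}$, so that all typicality inclusions of $S$ and $S'$ hold --- in which $x$ is still typical and a $B$-element, giving $E_k \cup S \cup S' \not\models_{\alctr} \tip(\top)\sqsubseteq\neg B$.

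The main obstacle I anticipate is the $(\Rightarrow)$ direction: turning a rank-$0$ element of a model of $E_k$ that merely \emph{satisfies} the materializations $\tilde{S}\sqcap\tilde{S'}$ into an honest \emph{model of} $E_k \cup S \cup S'$ in which $\tip(\top)$ picks out exactly such elements. One must be careful that imposing $S$ and $S'$ as genuine typicality inclusions does not inadvertently raise the rank of $B$ or exclude $x$ from $\min_{<}(\top^I)$; the clean way is to invoke the canonical-model machinery already available (minimal canonical $\alctr$ models of $E_k$, cf.\ Theorem~\ref{Theorem_RC_TBox} and the rank-preservation remarks), where the non-exceptionality of $\tilde{S}\sqcap\tilde{S'}\sqcap B$ w.r.t.\ $E_k$ guarantees a model of $E_k \cup S \cup S'$ with a typical $B$-element, exactly as in the proof of Proposition~\ref{completezza_MP_closure}. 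Everything else is bookkeeping with the definition of materialization.
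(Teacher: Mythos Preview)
Your overall decomposition is right, and your $(\Leftarrow)$ direction is essentially the paper's argument: a rank-$0$ element of a model of $E_k\cup S\cup S'$ is a typical instance of every $C$ with $\tip(C)\sqsubseteq D$ in $S\cup S'$, hence lies in each $\neg C\sqcup D$, hence in $\tilde S\sqcap\tilde{S'}$.

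The gap is in $(\Rightarrow)$. You correctly flag it as the obstacle, but neither of the fixes you propose actually closes it. The ``minimal-rank variant'' idea---keeping only those rank-$0$ elements that satisfy $\tilde S\sqcap\tilde{S'}$ and pushing the rest up---need not yield a model of $E_k\cup S\cup S'$: a typicality inclusion $\tip(C)\sqsubseteq D\in S\cup S'$ with $\rf(C)<k$ may acquire new $C$-minimal elements (formerly at higher rank in the original model of $E_k$) that do \emph{not} satisfy $D$. The appeal to the machinery of Proposition~\ref{completezza_MP_closure} is also off target: there one works inside a minimal canonical model of the full $K$ and uses that $\rf(\tilde S\sqcap B\sqcap\neg D)=k$ in the rational closure of $K$; here you would need to know that $B$ has rank $0$ in the rational closure of $E_k\cup S\cup S'$, which is precisely the statement you are trying to prove, so the argument is circular.

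What the paper does for $(\Rightarrow)$ is an explicit two-model disjoint-union construction: it takes the witnessing model $\emme_1$ of $E_k$ (with a rank-$0$ element $x\in(\tilde S\sqcap\tilde{S'}\sqcap B)^{I_1}$), and separately a finite minimal \emph{canonical} $\alctr$ model $\emme$ of $E_k\cup S\cup S'$ (which exists because $E_k\cup S\cup S'\subseteq\mathcal T$ is consistent). It then forms $\emme'$ on the disjoint union of the domains, keeping the ranks of $\Delta$ unchanged, placing $x$ at rank $0$, and shifting the remaining elements of $\Delta_1$ above the maximal rank of $\emme$. Canonicity of $\emme$ is what guarantees that every $\tip(C)\sqsubseteq D\in S\cup S'$ with finite $\rf(C)$ still has its $C$-minimal elements inside $\Delta\cup\{x\}$ (either already in $\Delta$, or $x$ itself, which satisfies $\neg C\sqcup D$ by construction), so $\emme'$ is a genuine model of $E_k\cup S\cup S'$ with a rank-$0$ $B$-element. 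This construction is the missing ingredient in your proposal.
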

\begin{proof}
%{\color{blue} Prova basata sul fatto che i defaults con rango infinito sono in $E_k$.}
Remember that
$E_k \subseteq {\cal T}$ is the set of defeasible inclusion having rank $\geq k$ in the rational closure construction.
%(and includes the defeasible inclusions with infinite rank).
We show that,  for any set $H$ of defeasible inclusions in ${\cal T}$:
\begin{center}
$ E_k \cup H    \models_{\alctr} \tip(\top)  \sqsubseteq \neg B $ $\iff$ 
$ E_k     \models_{\alctr} \tip(\top)  \sqcap \tilde{H} \sqsubseteq \neg B $.
\end{center}

\noindent
($\Leftarrow$)
By contraposition, suppose $ E_k \cup H  \not  \models_{\alctr} \tip(\top)  \sqsubseteq \neg B $.
Then,  there is an $\alctr$ model $\emme=\la \Delta, <, I\ra$ of $E_k \cup H$, and a domain element $x \in \Delta$ such that  
$k_{\emme}(x)=0$ \normalcolor and $x \in  B^I$.

We show that $x \in \tilde{H}^I$. Let us consider any typicality inclusion $\tip(C) \sqsubseteq D$ in $H$.
We show that $x$ is an instance of its materialization $\neg C \sqcup D$, i.e., $x \in (\neg C \sqcup D)^I$. 
If $x \not \in C^I$, the conclusion follows trivially.
If $x \in C^I$, the considering that $x$ has rank $0$ in $\emme$ and that  $\emme$ satisfies $\tip(C) \sqsubseteq D$,
$x$ is a typical $C$ element and hence it must be $x\in D^I$. Therefore, $x \in (\neg C \sqcup D)^I$.
As this holds for all the typicality inclusion in $H$, $x \in \tilde{H}^I$ and, hence, $x \in (\tip(\top) \sqcap \tilde{H} \sqcap B)^I$, which proves the thesis.

\noindent
($\Rightarrow$)  By contraposition, let $ E_k   \not  \models_{\alctr} \tip(\top)  \sqcap \tilde{H} \sqsubseteq \neg B $.
Then,  there is a model $\emme_1=\la \Delta_{1}, <_1, I_1\ra$ of $E_k$, and a domain element $x \in \Delta_1$ such that   $x \in (\tip(\top) \sqcap \tilde{H} \sqcap B)^{I_1}$, i.e., 
$k_{\emme_1}(x)=0$, $x \in \tilde{H}^{I_1}$ and $x \in  B^{I_1}$.
 
The model $\emme_1$ might not satisfy all the typicality inclusions $\tip(C) \sqsubseteq D$ in $H$. 
let us consider a model $\emme_2$ of $ E_k \cup H$. 
Such a model must exist, otherwise, the TBox ${\cal T}$ would be unsatisfiable and any concept would have an infinite rank in the rational closure of ${\cal T}$.
Conversely, we know that $B$ has a finite rank $k$.
Hence, let  $\emme=\la \Delta, <, I\ra$ be a finite minimal canonical model of $E_k \cup H$. Existence of a finite, minimal, canonical models of a consistent TBox in $\alctr$ is guaranteed by Theorem 7 in \cite{AIJ15}). Suppose that $\Delta$  and $\Delta_1$ are disjoint.
We build from $\emme$ and $\emme_1$  a new model $\emme'$ of $ E_k \cup H$ in which the concept $\tip(\top) \sqcap B$ is satisfiable.

Let us define $\emme'= \langle \Delta', <', I' \rangle$   as follows:
$\Delta'= \Delta \ \cup \ \Delta_1$; 
$I'$ is defined on the elements of $\Delta$ as $I$ in $\emme$, and on the elements of $\Delta_1$  as $I_i$ in $\emme_1$.
For the interpretation of concepts: for $x \in \Delta$, $x \in C^{I'}$ if and only if $x \in C^{I}$;
for $x \in \Delta_1$, $x \in C^{I'}$ if and only if $x \in C^{I_1}$.
For the interpretation of roles: for $x , y \in \Delta$, $(x,y) \in R^{I'}$ if and only if $(x,y) \in R^{I}$;
for $x,y \in \Delta_1$, $(x,y) \in R^{I'}$ if and only if $(x,y) \in R^{I_1}$; and, for any two elements $x \in \Delta$ and $y \in \Delta_1$, 
$(x,y) \not \in R^{I'}$ and $(y,x) \not \in R^{I'}$.
For all individual constants $a\in {\cal O}$, we let ${a}^{I'} = {a}^{I}$.
Finally,
%for all elements $x$ of $\Delta'$, we attribute them the lowest rank they can have (i.e.  $k_{\emme'}(w) =$ rank ($\Gamma$) where $\Gamma$ is the conjunction of all concepts of which w is an instance in $\emme'$.
for all $w \in \Delta$, we let $k_{\emme'}(w) =k_{\emme}(w)$, 
for the element $x \in \Delta_1$ which is an instance of  $\tip(\top) \sqcap \tilde{H} \sqcap B$, we let $k_{\emme'}(x) =0$;
finally,  for all $y \in \Delta_1$ ($y \neq x$), we let $k_{\emme'}(y) =n+1+k_{\emme_1}(y)$, where $n$ is the highest value of $k_{\emme}$ in $\emme$
($n$ is finite as each element in $\emme$ has a finite rank).
%In particular, $k_{\emme'}(x) =i$.
\normalcolor

It is easy to show that by construction the resulting model $\emme'$ satisfies $E_k \cup H$.
Let $C \sqsubseteq D$ be strict inclusion in $E_k \cup H$.
In the first case, $C \sqsubseteq D$ is a strict inclusion.
Let $x\in C^{I'}$. There are two cases: either $x \in \Delta$ or $x \in \Delta_1$.
In the first case, $x \in C^I$ in $\emme$. As $\emme$ satisfies $K$, $x \in D^I$ and, by definition of $\emme'$, $x \in D^{I'}$.
In the second case, $x \in C^{I_1}$. As $\emme_1$ satisfies all the strict inclusions in ${\cal T}$ (which belong to $E_k$), 
$x \in D^{I_1}$ and, by definition of $\emme'$, $x \in D^{I'}$.

Let $\tip(C) \sqsubseteq D$ be a defeasible inclusion in $E_k \cup H$.
If $\rf(C) \geq k$, then by the construction of the rational closure $\tip(C) \sqsubseteq D$ is in $E_k$
and hence is satisfied both in $\emme$ and in $\emme_1$.
Let $z\in (\tip(C))^{I'}$, then 
either $z \in \Delta$ or $z \in \Delta_1$.
In the first case, $z$ is $C$-minimal in $\emme$ and $z \in D^I$. Hence, by definition of $\emme'$, $z \in D^{I'}$.
In the second case,  $z$ is $C$-minimal in $\emme_1$ and $z \in D^{I_1}$. Hence, by definition of $\emme'$, $z \in D^{I'}$.

If $\rf(C) =j < k$, then $\tip(C) \sqsubseteq D$ is in $H$ but not in $E_k$. As the rank of $C$ in the rational closure is finite, by Proposition 13 in \cite{AIJ15},
$C$ has finite rank $j$ in any minimal canonical model of the TBox ${\cal T}$. Hence, $C$ is consistent with the TBox ${\cal T}$, as well as with its subset $E_k \cup H\subseteq  {\cal T}$. As $\emme$ is a canonical model of $E_k \cup H\subseteq  {\cal T}$, there must be an element in $w \in \Delta$ such that $w \in C^I$. Therefore, each minimal $C$ element in $\emme$ either is $x$ (and, in this case, $x$ is in $(\neg C \sqcap D)^{I'}$ and hence in $D^{I'}$),
or it is an element $z \in \Delta$. As $\emme$ satisfies $H$, it satisfies $\tip(C) \sqsubseteq D$ and, hence, $z \in D$.

%Observe that all the assertions in the ABox are satisfied in $\emme$
%and we have interpreted individual constants over the elements of $\Delta$ as in $\emme$:  $a^{I'}=a^I$, for all $a \in {\cal O}$.
%By construction, for $x \in \Delta$,  $x \in C^{I'}$ iff $x \in C^I$.
%Hence, if $B(a) \in$ ABox is satisfied in $\emme$, then it is satisfied in $\emme'$ as well.

From this, we can conclude that $\emme'$ is a model satisfying $E_k \cup H$,
which contains an element $x$ with rank $k_{\emme'}(x)=0$ such that $x \in B$.
Therefore, $ E_k \cup H  \not  \models_{\alctr} \tip(\top)  \sqsubseteq \neg B $, which concludes the proof.
\end{proof}
The  above reformulation of the notion of global compatibility makes the relationship between the notion of skeptical closure and the notion of MP-closure more evident.

In particular, for a concept $B$ with $\rf(B)=k$,
when (in the MP-closure construction) there is 
a single maximal set of defeasible inclusions $S$ compatible with $B$ in ${\cal T}$,
i.e., such that 
$ E_k \not \models_{\alctr} \tip(\top) \sqcap \tilde{S} \sqsubseteq \neg B$,
then $E_k \cup S$ corresponds to the skeptical closure $S^{sk,B}$ of ${\cal T}$ with respect to $B$.

When in the MP-closure there are different maximal sets of defeasible inclusions $S^1, \ldots, S^r$ compatible with $B$ in ${\cal T}$,
the skeptical closure is defined to contain, in addition to $E_k$, the defeasible inclusions with rank $j$ in $S^1, \ldots, S^r$, for those ranks $j$ from $h$ to $k-1$ on which $S^1, \ldots, S^r$ exactly agree
(i.e., $S^1_j= \ldots= S^r_j$), where $h-1$ is the highest rank on which  $S^1, \ldots, S^r$ disegree  (i.e., $S^l_{h-1} \neq S^m_{h-1}$, for some $l$ and $m$).
%and such that there is not a rank $j>h$ on which $S^1, \ldots, S^r$ disegree  (i.e., $S^l_j \neq S^m_j$).
If the sets $S^1, \ldots, S^r$ disagree on some defeasible inclusion with rank $j$, no defeasible inclusion with rank $j$ or lower is included in the skeptical closure.

%
%It is also clear that the notion of skeptical closure is weaker than the notion of MP-closure.

Based on the reformulation above and on the correspondence between the MP-closure of a knowledge base and its minimal canonical BP-models, we are now able to provide a semantic characterization of the skeptical closure.

Given a TBox ${\cal T}$, let $DI(B)$ be the set of the defeasible inclusions $\tip(C) \sqsubseteq D\in {\cal T}$ which are satisfied by all the minimal $B$ elements in any the minimal canonical BP-models of ${\cal T}$:
\begin{align*}
\mathit{DI(B)}= \{ \tip(C) \sqsubseteq D\in K \mid & \;  x\in (\neg C \sqcup D)^I, \mbox{ for any }  x \in min_<(B^I)  \mbox{ in any minimal  } \\
& \mbox{ canonical BP-model }   \emme= \la \Delta, <_{rc},<,\cdot^I \ra  \mbox{ of  ${\cal T}$  } \}
 \end{align*}
 %
%$DI_K^{MP}$ are the inclusions in $K$ which are BP-entailed by $K$.
%They are the defaults on which are accepted in all $B$ minimal elements in all the minimal canonical BP-model of $K$.
Let $\mathit{Confl\_DI(B)}$ be the set of the conflicting defeasible inclusions for $B$ in ${\cal T}$, defined as the typicality inclusions which are satisfied in some minimal $B$ element in a minimal canonical BP-models of ${\cal T}$, but not in all of them:
 \begin{quote}
$Confl\_DI(B)= \{ \tip(C) \sqsubseteq D\in K \mid \; x\in (\neg C \sqcup D)^I$ and $y\in ( C \sqcap \neg D)^I$ \\
$\;$ $\mbox{ \ \ \ \ \ \ \ \ \ \ \ \ \ \ \ \ \ \ \ \ \ \ \ \ \ \ \ \  }$for some minimal  canonical   BP-model $\emme= \la \Delta, <_{rc},<,\cdot^I \ra$ \\
$\;$ $\mbox{ \ \ \ \ \ \ \ \ \ \ \ \ \ \ \ \  \ \ \ \ \ \ \ \ \ \ \ \ }$of ${\cal T}$ and for some $x,y \in min_<(B^I) \}$
%\; \emme \models \tip(C) \sqsubseteq D,\; for all minimal \\ $\;$ $\mbox{ \ \ \ \ \ \ \ }$ \hspace{3cm}  canonical BP-models $\emme= \la \Delta, <_{rc},<,\cdot^I \ra$ of $K \}$
 \end{quote}
 They are the defaults on which there is no agreement among minimal $B$ elements in at least some minimal canonical BP-model of ${\cal T}$.
 Let ${\cal S}$ be all the concepts occurring in the knowledge base or in the query,
 and let   ${\cal C}_j$ be the set of all the concepts with rank $j$:
 \begin{align*}
{\cal C}_j= \{  C  \in {\cal S} \mid & \;  k_\emme(C^I)=j  \mbox{ in any  minimal  canonical  BP-model }  \\
& \mbox{ \ \ \ \ \ \ \ \ \ \ \ \ \ \ \ \ \ \ \ \ \ \ \ \ \ \ \ \ \ \ \ \ \ \ \ \ \ }\emme= \la \Delta, <_{rc},<,\cdot^I \ra  \mbox{ of  } {\cal T} \}
 \end{align*}
We identify the defeasible inclusions with rank $j$ in $\mathit{DI(B)}$ and in $\mathit{Confl\_DI(B) } $, respectively:
\begin{align*}  
 \mathit{DI_j(B)} = & \mathit{DI(B) } \cup {\cal C}_j \\
 \mathit{Confl\_DI_j(B) } = & \mathit{ Confl\_DI(B)} \cup {\cal C}_j
 \end{align*}
We can now define the set of defeasible inclusions which are included in the skeptical closure of $B$, $S^{sk,B}$, as follows:
$$\mathit{DI\_Sk(B)= \bigcup_ {j=h,k-1} DI_j (B) }$$
where $h$ is the lowest integer, form $0$ to $k-1$, such that, for all $j>h$,  $\mathit{Confl\_DI_j(B) } = \emptyset$.
%where $h$ is the highest integer, form $0$ to $k-1$, such that  $\mathit{Confl\_DI_h(B) } \neq \emptyset$.

 $\mathit{DI\_Sk(B)}$
is the set of defeasible inclusions which are included in the skeptical closure of $B$, $S^{sk,B}$.
 Essentially, $\mathit{DI\_Sk(B)}$ contains the defeasible inclusions on which all the minimal canonical models agree, in the following sense:
for each rank $j$, from $h$ to $k-1$, $\mathit{DI_j(B)}$ is the set of all the defeasible inclusions of rank $j$ which are satisfied by all the  minimal $B$-elements in all the minimal canonical BP-model of ${\cal T}$.
Also, the minimal $B$ elements of the minimal canonical BP-models of ${\cal T}$ 
must agree on accepting or not  the defeasible inclusions with rank $\geq h$ (as there are no conflicting defeasible inclusions of rank $\geq h$ for $B$).
Instead, they disagree on accepting or not some defeasible inclusion with rank $h-1$.
 
 \begin{proposition}

 Let  $\tip(B) \sqsubseteq D$ be a query and ${\cal T}$ a TBox. The defeasible inclusion
 $\tip(B) \sqsubseteq D$ is in the skeptical closure of TBox if and only if
$$\mathit{Strict({\cal T}) \cup DI\_Sk(B)} \models_{\alctr} \tip(\top) \sqsubseteq (\neg B \sqcup D)$$
where $\mathit{Strict({\cal T})}$ is the set of strict inclusions in ${\cal T}$.
 \end{proposition}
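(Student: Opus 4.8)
The plan is to reduce the statement to the set-level identity $\mathit{Strict({\cal T})}\cup\mathit{DI\_Sk(B)}=S^{sk,B}$, from which the equivalence is immediate by Definition~\ref{defi:skeptical_closure}; the case $\rf(B)=\infty$ is routine (the inclusion is in the skeptical closure by Definition~\ref{defi:skeptical_closure}, and since $B$ is then unsatisfiable in every model of ${\cal T}$ the set $\mathit{DI(B)}$, and hence $\mathit{DI\_Sk(B)}$, comprises all defeasible inclusions, so that $\mathit{Strict({\cal T})}\cup\mathit{DI\_Sk(B)}={\cal T}$ entails $\tip(\top)\sqsubseteq\neg B$). For finite $\rf(B)=k$ I would unfold both sides: $S^{sk,B}=E_k\cup S_{k-1}^B\cup\dots\cup S_h^B$ with $E_k=\mathit{Strict({\cal T})}\cup\{\tip(C)\sqsubseteq D\in{\cal T}\mid\rf(C)\ge k\}$, while $\mathit{Strict({\cal T})}\cup\mathit{DI\_Sk(B)}$ contains $\mathit{Strict({\cal T})}$, every defeasible inclusion of rank $\ge k$ --- these lie in $\mathit{DI(B)}$, because by Corollary~\ref{corollary} each minimal $B$-element has $<_{rc}$-rank $k$ and therefore satisfies $E_k$ --- and the rank-$j$ part of $\mathit{DI(B)}$ for $h\le j<k$. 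So the real work is to prove (a) that for every $j$ with $h\le j<k$ the rank-$j$ inclusions in $\mathit{DI(B)}$ are exactly $S_j^B$, and (b) that the threshold $h$ used in the definition of $\mathit{DI\_Sk(B)}$ coincides with the one in the definition of $S^{sk,B}$.

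To prove (a) and (b) I would compare $\mathit{DI(B)}$ and $\mathit{Confl\_DI(B)}$ with the family $S^1,\dots,S^r$ of maximal sets of defeasible inclusions compatible with $B$ in ${\cal T}$, exploiting Theorem~\ref{th:BP-MP} (and its ingredients, Propositions~\ref{correttezza_MP_closure} and~\ref{completezza_MP_closure}): ``$\tip(C)\sqsubseteq E$ is satisfied by every minimal $B$-element in every minimal canonical BP-model'' is the same as ${\cal T}\models_{BP}^{min}\tip(B)\sqsubseteq(\neg C\sqcup E)$, which by Theorem~\ref{th:BP-MP} holds iff $E_k\models_{\alctr}\tip(\top)\sqcap\tilde S\sqsubseteq(\neg B\sqcup\neg C\sqcup E)$ for every maximal compatible $S$, and symmetrically ``satisfied by some minimal $B$-element in some minimal canonical BP-model'' corresponds to the failure of ${\cal T}\models_{BP}^{min}\tip(B)\sqsubseteq(C\sqcap\neg E)$. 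The bridging lemma I need is: \emph{for a defeasible inclusion $\tip(C)\sqsubseteq E$ with $\rf(C)=j<k$ and a maximal compatible set $S$, one has $E_k\models_{\alctr}\tip(\top)\sqcap\tilde S\sqsubseteq(\neg B\sqcup\neg C\sqcup E)$ if and only if $\tip(C)\sqsubseteq E\in S$}. The ``if'' direction is immediate from $\tilde S\sqsubseteq(\neg C\sqcup E)$; for ``only if'', if $\tip(C)\sqsubseteq E\notin S$ then $S\cup\{\tip(C)\sqsubseteq E\}$ cannot be compatible with $B$ --- otherwise it would be a compatible set preferred to $S$ (one has $S\cup\{\tip(C)\sqsubseteq E\}\prec S$, taking the relevant rank in the definition of $\prec$ to be $j$), against the maximality of $S$ --- so by the reformulation of global compatibility (the previous Proposition) $E_k\models_{\alctr}\tip(\top)\sqcap\tilde S\sqcap(\neg C\sqcup E)\sqsubseteq\neg B$, and combining this with the compatibility of $S$ one obtains a rank-$0$ element of $(\tilde S\sqcap B\sqcap C\sqcap\neg E)^I$ in some model of $E_k$, witnessing that $E_k\not\models_{\alctr}\tip(\top)\sqcap\tilde S\sqsubseteq(\neg B\sqcup\neg C\sqcup E)$. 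Granting the lemma, a rank-$<k$ inclusion lies in $\mathit{DI(B)}$ iff it lies in every $S^l$, and lies in $\mathit{Confl\_DI(B)}$ iff it lies in some but not all of the $S^l$; hence $\mathit{Confl\_DI_j(B)}=\emptyset$ exactly when all the $S^l$ agree at rank $j$, which by the relationship between the skeptical closure and the MP-closure discussed above is exactly the range $j\ge h$ --- proving (b) --- and at each such $j$ the common value $S^1_j=\dots=S^r_j$ is $S_j^B$ --- proving (a). Assembling, $\mathit{Strict({\cal T})}\cup\mathit{DI\_Sk(B)}=S^{sk,B}$, and the statement follows from Definition~\ref{defi:skeptical_closure}.

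The step I expect to be the main obstacle is the bridging lemma and, more generally, pinning down the exact correspondence between the semantic sets $\mathit{DI(B)},\mathit{Confl\_DI(B)}$ and the family of maximal compatible sets: it requires carefully combining Theorem~\ref{th:BP-MP}, the reformulation of global compatibility, and the $\prec$-maximality argument, while keeping the three blocks of ranks (above $k$, between $h$ and $k-1$, below $h$) separate, since only the middle block contributes genuinely new defeasible inclusions to $\mathit{DI\_Sk(B)}$.
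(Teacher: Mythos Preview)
The paper does not actually prove this proposition: it is stated at the end of Section~\ref{sec:semantics_Skeptical_closure} without a proof environment, supported only by the informal paragraph preceding it, which asserts (i) that each maximal compatible set $S^l$ contains $\delta(E_k)$, (ii) that $S^{sk,B}$ consists of $E_k$ together with the rank-$j$ parts on which the $S^l$ all agree, for $j$ down to the first rank of disagreement, and (iii) that $\mathit{DI\_Sk(B)}$ captures exactly these agreed-upon inclusions via the BP-semantics. Your proposal follows precisely this intended route and supplies the details the paper omits: the bridging lemma (membership in a maximal $S$ is equivalent to the entailment $E_k\models_{\alctr}\tip(\top)\sqcap\tilde S\sqsubseteq(\neg B\sqcup\neg C\sqcup E)$), the use of Theorem~\ref{th:BP-MP} to translate $\mathit{DI(B)}$ and $\mathit{Confl\_DI(B)}$ into intersection/symmetric-difference conditions on the family $\{S^l\}$, and the identification of the two thresholds $h$.

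Two points deserve a little more care than your sketch gives them. First, the inductive step ``when the $S^l$ agree on ranks $>j$, they all equal $S_j^B$ at rank $j$ if and only if $S_j^B$ is globally compatible'' is the crux of matching the two definitions of $h$, and you should make explicit the argument that if $S_j^B$ is \emph{not} globally compatible then there are at least two distinct $\prec$-maximal compatible subsets at rank $j$ (since every element of $S_j^B$ is individually compatible, each lies in some maximal subset, and these cannot all coincide without giving back $S_j^B$); the paper's text simply asserts this. Second, your treatment of $\rf(B)=\infty$ is slightly too quick: $\rf(B)=\infty$ does not by itself make $B$ unsatisfiable in every model of ${\cal T}$, only in the minimal canonical ones; the cleaner route is to observe that then $E_k\models_{\alctr}\tip(\top)\sqsubseteq\neg B$ for the terminal $E_k$, whence both sides of the biconditional hold trivially. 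With these refinements your argument is the proof the paper leaves implicit.
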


  \section{Conclusions and related work}  \label{sec:conclu} %(CONCLUSIONI DA FINALE PRUV)
 
% In this work we have defined a new notion of closure for the description logic ALC, which is a refinement of the rational closure,  weaker than the lexicographic closure, introduced by Lehmann  \cite{Lehmann95} and extended to the description logics by Casini and Straccia in \cite{Casinistraccia2012}.
We have introduced the skeptical closure which is  a weaker variant of the lexicographic closure  \cite{Lehmann95,Casinistraccia2012}, which deals with the problem of ``all or nothing" affecting the rational closure without generating alternative ``bases".
Its computation only requires a polynomial number of calls to the underlying preferential $\alctr$ reasoner.

 Other refinements of the rational closure, which also deal with this limitation of the rational closure, are the relevant closure \cite{Casini2014} and
 the inheritance-based rational closure \cite{Casinistraccia2011,CasiniJAIR2013}. 
 In particular, in  \cite{Casinistraccia2011,CasiniJAIR2013}, a new closure construction is defined by combining the rational closure with defeasible inheritance networks. This inheritance-based rational closure, in  Example  \ref{exa:weak}, is able to conclude that  typical working students are young, relying on the fact that only the information related to the connection of  $\mathit{WStudent}$ and $\mathit{Young}$ (and, in particular, only the defeasible inclusions occurring on the routes connecting $\mathit{WStudent}$ and $\mathit{Young}$ in the corresponding net) are used in the rational closure construction for answering the query.

Another approach which deals with the above problem of inheritance blocking has been proposed by Bonatti et al. in \cite{bonattiAIJ15},
where the logic ${\cal DL}^N$ captures a form of  ``inheritance with overriding": a defeasible inclusion is inherited by a more specific class if it is not overridden by more specific (conflicting) properties.  In Example  \ref{exa:weak}, our construction behaves differently from ${\cal DL}^N$, as in ${\cal DL}^N$
%While in the logic ${\cal DL}^N$ \cite{bonattiAIJ15}, given the knowledge base $K'$,
 the concept $\mathit{WStudent}$ has an inconsistent prototype, as working students inherit two conflicting properties by superclasses:  the property of  students of non paying taxes and the property of workers of paying taxes.
%and $N \mathit{WStudent} \sqsubseteq \bot$  is derivable, 
Instead, in the skeptical closure one cannot conclude that { $\mathit{\tip({WStudent})}$ $ \mathit{\sqsubseteq \bot}$} and,
using the terminology in \cite{bonattiAIJ15}, the conflict is ``silently removed". In this respect, the  skeptical closure appears to be weaker than ${\cal DL}^N$,
although it shares with ${\cal DL}^N$ (and with the lexicographic closure) a notion of overriding. 

Bozzato et al. in \cite{Bozzato2018}
present an extension of the CKR framework in which defeasible axioms are allowed in the global context 
%and exceptions can be handled by overriding and have to be justified in terms of semantic consequence.
and can be overridden by knowledge in a local context. Exceptions have to be justified in terms of semantic consequence.
A translation of extended CHRs (with  knowledge bases in ${\cal SROIQ}$-RL) into Datalog programs under the answer set semantics is also defined.

%In \cite{Bozzato14} the CKR framework  is presented; it is
%based on {\em SROIQ-RL},
%allows for defeasible axioms with local exceptions and exploits a translation to Datalog with negation.
%It is shown that instance checking in CKR reduces to (cautious) inference under the answer set semantics.

Concerning the multipreference semantics  introduced in \cite{GliozziAIIA2016} (and further refined in \cite{MultipreferenceArxiv2018}) to provide a semantic strengthening of the  rational closure,
we have shown in \cite{MultipreferenceArxiv2018} that the MP-semantics, a variant of Lehmann's lexicographic closure which does not take into account the number of defaults 
within the same rank, but only their subset inclusion (as recalled in Section \ref{sez:lex_closure}), provides a sound approximation of the multipreference semantics.
In this paper we have given a semantic characterization of the MP-closure by bi-preference minimal entailment. As a consequence, BP-minimal entailment is weaker that the multipreference semantics and, furthermore, 
the skeptical closure introduced in Section  \ref{sec:Skeptical_closure} is still a sound, weaker, approximation for the multipreference semantics in \cite{MultipreferenceArxiv2018}.

The relationships among the above variants of rational closure for DLs and the notions of rational closure for DLs developed in the contexts of fuzzy  logic \cite{CasiniStracciaLPAR2013} and probabilistic logics \cite{Lukasiewicz08} %will be subject of future work. 
are worth of being investigated.  %to be investigated as well.
As it has been show in \cite{dubois-prade} for the propositional logic case, KLM preferential logics and the rational closure \cite{KrausLehmannMagidor:90,whatdoes}, the probabilistic approach \cite{Adams:75}, the system Z \cite{PearlTARK90} and the possibilistic approach  \cite{BenferhatKR92,dubois-prade} %deDupinJAR08} 
are all related with each other, and similar relations might be expected to hold for the non-monotonic extensions of description logics as well.
Although the skeptical closure has been defined based on the preferential extension of $\alc$, the same construction could be adopted for more expressive description logics, provided the rational closure can be consistently defined \cite{GiordanoFI2018}, as well as for  the propositional case. \normalcolor

\medskip
{\bf Acknowledgement:} This research is partially supported by INDAM-GNCS Project 2018 ``Metodi di prova orientati al ragionamento automatico per logiche non-classiche".

%\bibliography{biblioMultipreferenze}

\end{document}